\theoremstyle{plain}
\newtheorem{theorem}{Theorem}[section]
\theoremstyle{definition}
\newtheorem{definition}[theorem]{Definition}
\theoremstyle{remark}
\newcommand{\norm}[1]{{\lVert {#1} \rVert}}
\ificcvfinal\pagestyle{empty}\fi
\begin{document}

\title{DPM-OT: A New Diffusion Probabilistic Model Based on Optimal Transport}

\author{Zezeng Li$^{1,2}$, ~ShengHao Li$^{1}$, ~Zhanpeng Wang$^{3}$, ~Na Lei$^{1}$\thanks{Corresponding author: \emph{Na Lei (nalei@dlut.edu.cn)}}, ~Zhongxuan Luo$^{1}$, ~Xianfeng Gu$^4$\\ 	
\normalsize{$^1$School of Software, Dalian University of Technology, China}\\
\normalsize{$^2$Beijing Key Laboratory of Light-field Imaging and Digital Geometry, Capital Normal University, China}\\
\normalsize{$^3$School of Mathematical Sciences, University of the Chinese Academy of Sciences, China} \\
\normalsize{$^4$Computer Science and Applied Mathematics, State University of New York at Stony Brook, USA}
}

\maketitle
\ificcvfinal\thispagestyle{empty}\fi

\begin{abstract}
   Sampling from diffusion probabilistic models (DPMs) can be viewed as a piecewise distribution transformation, which generally requires hundreds or thousands  of steps of the inverse diffusion trajectory to get a high-quality image. Recent progress in designing fast samplers for DPMs achieves a trade-off between sampling speed and sample quality by knowledge distillation or adjusting the variance schedule or the denoising equation. However, it can't be optimal in both aspects and often suffer from mode mixture in short steps. To tackle this problem, we innovatively regard inverse diffusion as an optimal transport (OT) problem between latents at different stages and propose the \textbf{DPM-OT}, a unified learning framework for fast DPMs with a direct expressway represented by OT map, which can generate high-quality samples within around 10 function evaluations. By calculating the semi-discrete optimal transport map between the data latents and the white noise, we obtain an expressway from the prior distribution to the data distribution, while significantly alleviating the problem of mode mixture. In addition, we give the error bound of the proposed method, which theoretically guarantees the stability of the algorithm. Extensive experiments validate the effectiveness and advantages of \textbf{DPM-OT} in terms of speed and quality (FID and mode mixture), thus representing an efficient solution for generative modeling. Source codes are available at \url{https://github.com/cognaclee/DPM-OT}
\end{abstract}

\section{Introduction}
Diffusion probabilistic models (DPMs)~\cite{sohl2015deep,ho2020denoising,song2020score} are a class of new prevailing generative models which use a parameterized Markov chain to produce samples matching the data distribution after a finite time. Transitions of this chain include two processes: the diffusion process gradually adds noise to a data distribution and the sampling process gradually reverses each step of the noise corruption over a long trajectory of timesteps. DPMs are able to produce high-quality samples and even superior to the current SOTAs generative adversarial networks (GANs)~\cite{goodfellow2014generative} on many tasks, such as image generation~\cite{dhariwal2021diffusion,meng2021sdedit,choi2021ilvr}, video generation~\cite{ho2022video}, text-to-image generation~\cite{ramesh2022hierarchical}, point cloud generation~\cite{luo2021diffusion,nichol2022point}, shape generation~\cite{zhou20213d,zeng2022lion} and speech synthesis~\cite{chen2020wavegrad,chen2021wavegrad}. Despite their success, the sampling of DPMs often requires iterating over thousands of timesteps, which is two or three orders of magnitude slower~\cite{song2020denoising,bao2022analytic} than single-step generative models GANs and VAEs~\cite{kingma2013auto}.  

To accelerate the sampling process, the community has been focusing on fast DPMs. Existing works have successfully accelerated DPMs by knowledge distillation~\cite{salimans2022progressive,luhman2021knowledge}, or adjusting the variance schedule~\cite{san2021noise,nichol2021improved,lam2021bilateral,watson2021learning} or the denoising equation~\cite{song2020denoising,jolicoeur2021gotta,tachibana2021taylor,popov2021diffusion,liu2022pseudo,bao2022analytic,lu2022dpm,zhao2023unipc}. However, as~\cite{kong2021fast,liu2022pseudo} remarks, early fast samplers cannot maintain the quality of samples and even introduce new noise at a high speedup rate, which limits their practicability. Moreover, existing methods try to approximate a continuous diffusion process with a deep neural network, but ignore the discontinuity of the target data manifold at the class boundary, which leads to mode mixture in the generated images.

To resolve the above issues, we cast the denoising process as an OT problem and then compute the Brenier potential~\cite{brenier1987polar,brenier1991polar} to represent the OT map which is discontinuous at singularity sets~\cite{figalli2010regularity,chen2017partial,an2019ae} and thus avoids mode mixture. Then we construct an optimal trajectory between different timestep latents, which combines multiple denoising processes into an OT map, thus greatly shortening the sampling trajectory. Building upon it, we propose \textbf{DPM-OT} which can generate high-quality images within around 10 steps of inverse diffusion. In summary, our main contributions are:
\begin{itemize}
\item By combining OT and diffusion model, we propose a unified learning framework \textbf{DPM-OT} for fast DPMs.
\item \textbf{DPM-OT} computes the Brenier potential to represent the OT map between different timesteps latents which relieves mode mixture significantly.
\item We theoretically analyze the single-step error and give the upper bound of the error between the generated data distribution and the target data distribution.
\item Extensive experiments demonstrate \textbf{DPM-OT} outperforms SOTAs in quality, especially for mode mixture.
\end{itemize}

\section{Preliminaries}
In this paper, we are committed to providing a plug-and-play fast DPM framework by incorporating the OT into DPM. So, in this section, we first review the generalized DPMs from ~\cite{ho2020denoising,song2020denoising,song2020score}. Then we introduce the semi-discrete optimal transport (SDOT) used later in this paper.

\subsection{Generalized Diffusion Probabilistic Model}
Given a data distribution $\boldsymbol{x}_0 \sim q(\boldsymbol{x}_0)$, DPMs define a diffusion process $q(\boldsymbol{x}_t | \boldsymbol{x}_{t-1})$ which produces a diffusion trajectory $\{\boldsymbol{x}_{t}\}_{t=1}^{T}$ by adding gaussian noise and a sampling process $p(\boldsymbol{x}_{t-1} | \boldsymbol{x}_{t})$ which reverse the diffusion process to reconstruct the original data. As song et al. remark in \cite{song2020score}, generalized DPMs can be expressed as solutions of stochastic differential equations (SDEs) of the form: 
\begin{equation}
    d\boldsymbol{x} = b(\boldsymbol{x},t)dt + \sigma(t)d\boldsymbol{w}  \label{eq:forwardSDE}
\end{equation}

where $\boldsymbol{w}$ is the standard Winener process (a.k.a., Brownian motion), $b(.,t): \mathbb{R}^d  \rightarrow \mathbb{R}^d $ is vector-valued funciton called the \textit{drift} coefficient of $\boldsymbol{x}(t)$, and $\sigma(\cdot): \mathbb{R} \rightarrow \mathbb{R}$ is a scalar function known as the \textit{diffusion} coefficient of $\boldsymbol{x}(t)$. Eq.~\eqref{eq:forwardSDE} is the limit of the following discrete form (Eq.~\ref{eq:disSDE}) in $\Delta t \rightarrow 0$, which is also known as \textbf{forward SDE}.  
\begin{equation}
    \boldsymbol{x}_{t+\Delta t} = \boldsymbol{x}_t + b(\boldsymbol{x},t)\Delta t + \sigma(t)\sqrt{\Delta t} \boldsymbol{z}, ~~\boldsymbol{z} \sim \mathcal{N}(\mathbf{0}, \mathbf{I})   \label{eq:disSDE}
\end{equation}

From a probability point of view, Eq.~\eqref{eq:disSDE} is reformulated in the following conditional probability:
\begin{equation}
    q(\boldsymbol{x}_{t+\Delta t}|\boldsymbol{x}_t) \sim \mathcal{N} \left( \boldsymbol{x}_t+b(\boldsymbol{x},t)\Delta t, \sigma^{2}(t) \Delta t \mathbf{I}   \right).
    \label{eq:disSDEprob_f}   
\end{equation}

With a sufficiently long diffusion trajectory $\{\boldsymbol{x}_{t}\}_{t=0}^{T}$ and a well-behaved schedule of $\{(b(\cdot,t),\sigma(t))\}_{t=0}^{T}$, the last latent $\boldsymbol{x}_T$ is nearly a Gaussian distribution. Starting from $\boldsymbol{x}_T \sim \mathcal{N}(\mathbf{0,I})$, the exact reverse diffusion distribution $q(\boldsymbol{x}_{t-1}|\boldsymbol{x}_t)$ is indispensable for the sampling process which gradually reverses each step of the noise corruption latents $\boldsymbol{x}_{t-1}$ from $\boldsymbol{x}_{t}$. However, since $q(\boldsymbol{x}_{t-1}|\boldsymbol{x}_t)$ depends on the entire data distribution, DPMs approximate it using a neural network parameterized by $\theta$ as follows:
\begin{equation}
\label{eq:nn}
p_{\theta}(\boldsymbol{x}_{t-1}|\boldsymbol{x}_t) \coloneqq \mathcal{N}(\boldsymbol{x}_{t-1}; \mu_{\theta}(\boldsymbol{x}_t, t), \Sigma_{\theta}(\boldsymbol{x}_t, t))
\end{equation}

Using Bayes rule, the posterior satisfies $p(\boldsymbol{x}_{t-\Delta t}|\boldsymbol{x}_{t}) \sim$ $\mathcal{N} (\boldsymbol{x}_{t} - 
[b(\boldsymbol{x}_{t},t)- 
    \sigma^{2}(t)\nabla_{\boldsymbol{x}_{t}} \textrm{log} q(\boldsymbol{x}_{t})]\Delta t, \sigma^{2}(t) \Delta t \mathbf{I})$. In $\Delta t \rightarrow 0$ , it converges to the following \textbf{inverse SDE}:
\begin{equation} 
d\boldsymbol{x} = [b(\boldsymbol{x},t)-\sigma^{2}(t)\nabla_{\boldsymbol{x}_t} \textrm{log}q(\boldsymbol{x}_t)]dt + \sigma(t)d\boldsymbol{w}. \label{eq:inverseSDE}   
\end{equation}

The estimation of $\nabla_{\boldsymbol{x}_t} \textrm{log}q(\boldsymbol{x}_t)$ is achieved by $\mathbf{s}_{\theta}$. The optimization of $\theta$ can be achieved by minimizing the variational lower bound (Eq.~\ref{eq:loss}) on negative log-likelihood.
\begin{equation}
\begin{aligned}
    L_{vlb} =& -\log p_{\theta}(\boldsymbol{x}_0 | \boldsymbol{x}_1)+  D_{KL}(q(\boldsymbol{x}_T | \boldsymbol{x}_0))||p(\boldsymbol{x}_T))  \\
    & +\sum_{t>1} D_{KL}(q(\boldsymbol{x}_{t-1}|\boldsymbol{x}_t,\boldsymbol{x}_0)|| p_{\theta}(\boldsymbol{x}_{t-1}|\boldsymbol{x}_t))\label{eq:loss} 
\end{aligned}
\end{equation}

After the model is trained well, $\mathbf{s}_\theta$ is a function approximator intended to predict $\boldsymbol{z} \sim \mathcal{N}(\mathbf{0,I})
$ from $\boldsymbol{x}_t$. To sample $\boldsymbol{x}_{t-1}$ from the posterior distribution defined in Eq.~\eqref{eq:nn} is equivalent to inverse diffusion through Eq.~\eqref{eq:inverseDSDE}.
\begin{equation} 
\boldsymbol{x}_{t-\Delta t} = \boldsymbol{x}_{t}-[b(\boldsymbol{x},t)-\sigma^{2}(t)\mathbf{s}_\theta(\boldsymbol{x}_t,t)]\Delta t + \sigma(t)\boldsymbol{z}. \label{eq:inverseDSDE}   
\end{equation}

\subsection{Semi-discrete Optimal Transport} 
Suppose the source measure $\mu$ defined on a convex domain $\Omega \subset \mathbb{R}^{d}$, the target domain is a discrete set $\boldsymbol{Y}=\left\{\boldsymbol{y}_{i}\right\}_{i\in\mathcal{I}}, \boldsymbol{y}_{i} \in \mathbb{R}^{d}$. The target measure is a Dirac measure $\nu=\sum_{i\in\mathcal{I}} \nu_{i} \delta\left(\boldsymbol{y}-\boldsymbol{y}_{i}\right)$ and the source measure is equal to total mass as $\mu(\Omega)=\sum_{i\in\mathcal{I}} \nu_{i} .$ Under a semi-discrete transport map $g: \Omega \rightarrow \boldsymbol{Y}$, a cell decomposition is induced $\Omega=\bigcup_{i\in\mathcal{I}} W_{i}$, such that every $\boldsymbol{x}$ in each cell $W_{i}$ is mapped to the target $\boldsymbol{y}_{i}, g: \boldsymbol{x} \in W_{i} \mapsto \boldsymbol{y}_{i}$. The map $g$ is measure preserving, denoted as $g_{\#} \mu=\nu$, if the $\mu$-volume of each cell $W_{i}$ equals to the $\nu$-measure of the image $g\left(W_{i}\right)=\boldsymbol{y}_{i}, \mu\left(W_{i}\right)=\nu_{i}$. The cost function is given by $c: \Omega \times \boldsymbol{Y} \rightarrow \mathbb{R}$, where $c(\boldsymbol{x}, \boldsymbol{y})$ represents the cost for transporting a unit mass from $\boldsymbol{x}$ to $\boldsymbol{y}$. The total cost of transport map $g(x)$ is given by
\begin{equation}
\int_{\Omega} c(\boldsymbol{x}, g(\boldsymbol{x})) d \mu(\boldsymbol{x})=\sum_{i\in\mathcal{I}} \int_{W_{i}} c\left(\boldsymbol{x}, \boldsymbol{y}_{i}\right) d \mu(\boldsymbol{x}).
\label{eq:cost}
\end{equation}
The SDOT map $g^{\ast}$ is a measure-preserving map that minimizes the total cost in Eq.~\eqref{eq:cost},
\begin{equation}
g^{\ast}:=\arg \min _{g_{\#} \mu=\nu} \int_{\Omega} c(\boldsymbol{x}, g(\boldsymbol{x})) d \mu(\boldsymbol{x}).\label{eq:SDOT}
\end{equation}

Based on \textbf{Theorem 1.1} of \textbf{supplementary material}, when the cost function $c(\boldsymbol{x}, \boldsymbol{y})=1 / 2\|\boldsymbol{x}-\boldsymbol{y}\|^{2}$, we have
$g^{*}(\boldsymbol{x})=\nabla \boldsymbol{u}(\boldsymbol{x}).$
This explains that the SDOT map is the gradient map of Brenier's potential $\boldsymbol{u}$. As~\cite{lei2020geometric,an2019ae} remark, $\boldsymbol{u}$ is the upper envelope of a collection of hyperplanes $\pi_{\boldsymbol{h},i}(\boldsymbol{x}) =  \boldsymbol{x}^T\boldsymbol{y}_i+h_i$ and can be parametrized uniquely up to an additive constant by a height vector $\boldsymbol{h} = (h_1, h_2, ..., h_{\left|\mathcal{I} \right|})^T$.  In such a case,
$\boldsymbol{u}_{\boldsymbol{h}}$ parameterized by $\boldsymbol{h}$ can be stated as follows,
\begin{equation}
\boldsymbol{u}_{\boldsymbol{h}}(\boldsymbol{x}) = \max_{i\in\mathcal{I}}\{\pi_{\boldsymbol{h},i}(\boldsymbol{x})\}, \boldsymbol{u}_{\boldsymbol{h}}: \Omega \rightarrow \mathbb{R}^n,\label{eq:uh}
\end{equation}

Given the target measure $\nu$, there exists Brenier's potential $\boldsymbol{u}_{\boldsymbol{h}}$ in Eq.~\eqref{eq:uh} whose projected volume of each support plane is equal to the given target measure $\nu_i$. To receive $\boldsymbol{u}_{\boldsymbol{h}}$, we only need to optimal $\boldsymbol{h}$ by minimizing the following convex energy function:
\begin{equation}
E(\boldsymbol{h})=\int_{\boldsymbol{0}}^{\boldsymbol{h}} \sum_{i\in\mathcal{I}} w_{i}(\eta) d \eta_{i}-\sum_{i\in\mathcal{I}} h_{i} \nu_{i},
\label{eq:EH}
\end{equation}
where $\omega_{i}(\eta)$ is the $\mu$-volume of $W_{i}(\eta)$.

\section{Diffusion Probabilistic Model Based on Optimal Transport} \label{sec:3}
\begin{figure*} 
\vskip -0.1in
\centering
\begin{center}
\centerline{\includegraphics[width=0.9\textwidth]{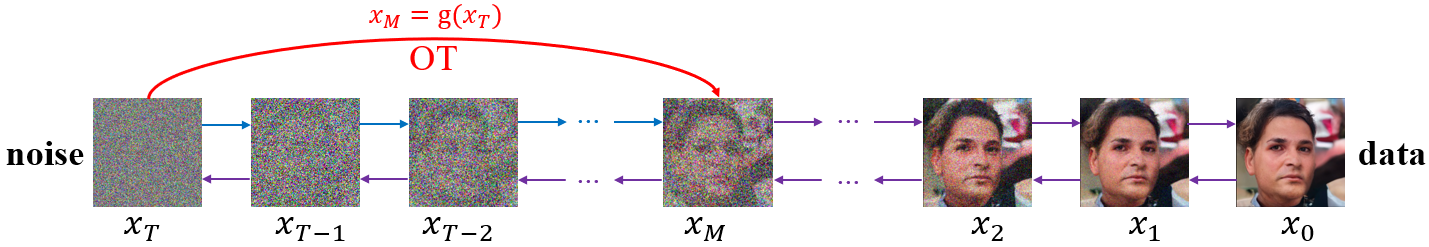}}
\caption{The framework of the proposed \textbf{DPM-OT}. The red curve indicates the \textbf{Optimal Trajectory}, which is induced by the SDOT map $g\left( \cdot\right)$ between $\boldsymbol{x}_T$ and $\boldsymbol{x}_M$. Correspondingly, the blue line indicates the first $T$-$M$ steps inverse diffusion of the vanilla DPM.}
\label{framework}
\end{center}
\vskip -0.3in
\end{figure*}

Leveraging the generative capabilities of DPMs and the distribution-aligned nature of OT, we propose a fast DPM whose framework is shown in Fig.~\ref{framework}. We give the definition of our \textbf{DPM-OT} sampler in Section~\ref{sec:OptimalTrajectory} and error analysis in Section~\ref{sec:analysis}. Section~\ref{sec:sampling} describes the proposed fast DPM from the perspective of the algorithm.

\subsection{Optimal Trajectory and Sampler}\label{sec:OptimalTrajectory}
Effective trajectory shortening method has been shown significant for sampling acceleration~\cite{watson2021learning,bao2022analytic,bao2022estimating} and high-fidelity generation. To this end, Eric and Salimans et al.~\cite{luhman2021knowledge,salimans2022progressive} recursively distill the 2-step trajectory in the teacher network into a single-step using the knowledge distillation technique. Bao et al.~\cite{bao2022analytic} and Watson et al. ~\cite{watson2021learning} use dynamic programming to estimate optimal trajectory, which can quickly generate high-quality images.
Bao et al.~\cite{bao2022estimating} propose to estimate the optimal covariance and its correction given imperfect means by learning these conditional expectations at each time step. Inspired by that prior wisdom, we propose a new SDOT based diffusion model \textbf{DPM-OT}, which builds a direct expressway represented by optimal trajectory. Thus, the single-step optimal trajectory replaces the multi-step trajectory in the vanilla DPM. The definition of optimal trajectory is given below.
\begin{definition}
\label{def:optimaltrajectory}
\textbf{(Optimal Trajectory)}. Given a $M$ steps trajectory $\{\boldsymbol{x}_{t-i}\}_{i=0}^{M}$ at time $t$, $M\leq t$, $\boldsymbol{x}_t \in \boldsymbol{X}_t$, the optimal trajectory from $\boldsymbol{x}_t$ to $\boldsymbol{x}_{t-M}$ is a single-step trajectory that is obtained by minimizing the following transport cost:
\begin{equation}
g^{\ast}:=\arg \min _{g_{\#} \mu=\nu} \int_{\boldsymbol{X}_t} c(\boldsymbol{x}_{t-M}, g(\boldsymbol{x}_t)) d \mu(\boldsymbol{x}_t).
\label{eq:optimaltrajectorycost}
\end{equation}
\end{definition}

According to Brenier’s theorem, the OT map $g^{\ast}(\cdot)$ is the gradient of the convex Brenier potential $\boldsymbol{u}_{\boldsymbol{h}}$ which satisfies the Monge-Ampére equation. The existence and the uniqueness of the solution to the Monge-Ampére equation have been proved by the Fields medalist Figalli in Chapter 2 of \cite{figalli2017monge}, where he used Alexandrov’s approach and claimed:
\romannumeral1). The sequence of Dirac distributions  $\left\{\nu_n\right\}$ weakly converges to data distribution $\nu$;
\romannumeral2). For each Dirac measure $\nu_n$, there exists an Alexandrov’s solution $\boldsymbol{u}_n$ (which is exactly the discrete Brenier potential in this paper);
\romannumeral3). The weak solution $\boldsymbol{u}_n$ converges to the real solution $\boldsymbol{u}_{\boldsymbol{h}}$ which is $C^1$ almost everywhere, except at the singular set. Thus, the OT map $g^{\ast}(\cdot)$ is continuous internally and discontinuous at the boundary singular set.

\begin{definition}
\label{def:DPM-OT}
\textbf{(DPM-OT Sampler)}. Given an initial latent code $\boldsymbol{x}_T \sim \mathcal{N}(\boldsymbol{0,I})$ at time $T$, the DPM-OT sampler needs to go through a $M+1$ steps trajectory $\boldsymbol{x}_T \cup \{\boldsymbol{x}_i\}_{i=M}^{0}$, where $M<T$. Let $g(\cdot)$ denote the OT map between $\boldsymbol{x}_T$ and $\boldsymbol{x}_{M}$, $f_t(\cdot,\cdot)$ denote the parameterized reverse diffusion process which can be any off-the-shelf DPM model. The DPM-OT sampler is defined as follows:
\begin{equation}
\label{eq:DPM-OT}
  \begin{split}
    \boldsymbol{x}_M &= g(\boldsymbol{x}_T),\\
    \boldsymbol{x}_{t-1}&=f_t(\boldsymbol{x}_t,\boldsymbol{z}), \quad t=M,...,1.
  \end{split}
\end{equation}
\end{definition}

The \textbf{DPM-OT Sampler} first transmits the white noise $\boldsymbol{x}_T$ to the manifold represented by the latent variable $\boldsymbol{x}_M$ through the OT map $g(\cdot)$, thus providing a near-perfect initial value for the subsequent inverse diffusion process. From \textbf{Definition}~\ref{def:optimaltrajectory}, $g(\cdot)$ is discontinuous at the singular set. Therefore, the manifold $\boldsymbol{x}_M$ can maintain the same attributes as the original data manifold, that is, discontinuity at the boundary singular point, thus avoiding mode mixture. Further, the subsequent $M$-step inverse diffusion gradually pushes latent variable $\boldsymbol{x}_M$ onto the target data manifold, and finally achieves high-quality data generation. The variable $\boldsymbol{z}$ in $f_t(\cdot,\cdot)$ is $\boldsymbol{0}$ when $t=1$ or $\boldsymbol{z} \sim \mathcal{N}(\mathbf{0,I})$ when $t>1$. $M$-step inverse diffusion process improves the generation ability of \textbf{DPM-OT Sampler}, which essentially completes the continuity of SDOT map.

\begin{algorithm}[tb]
   \caption{SDOT Map}
   \label{alg:OTMap}
\begin{algorithmic}
   \REQUIRE  Target dataset $\boldsymbol{Y}=\left\{\boldsymbol{y}_i\right\}_{i\in\mathcal{I}}$ with empirical distribution $\nu=\frac{1}{|\mathcal{I}|}\sum_{i\in\mathcal{I}} \nu_{i} \delta\left(\boldsymbol{y}-\boldsymbol{y}_{i}\right)$, number of Monte Carlo samples $N$, learning rate $lr$, threshold $\tau$, positive integer $s$, reverse diffusion steps $M$ and a well-behaved schedule of $\{(b_t,\sigma_t)\}_{t=0}^{T}$.
   \ENSURE OT map $g(\cdot)$.
   \STATE Initialize $\boldsymbol{h}=(h_1,h_2,\cdots,h_{|\mathcal{I}|})\leftarrow(0,0,\cdots,0)$
   \STATE Diffuse $\boldsymbol{y}_i$ forwardly by $M$ steps according to Eq.~\eqref{eq:disSDE}, obtain the latents set $\boldsymbol{X}_M=\{\boldsymbol{x}_{M}^i\}_{i\in\mathcal{I}}$
   \REPEAT
    \STATE Sample $N$ white noise samples $\left\{\boldsymbol{x}_T\sim \mathcal{N}(0,I)\right\}_{j=1}^N$
    \STATE Calculate $\nabla \boldsymbol{h}=(\hat{w}_i(\boldsymbol{h})-\nu_i )^T$.
    \STATE $\nabla \boldsymbol{h}=\nabla \boldsymbol{h}-mean(\nabla \boldsymbol{h})$.
    \STATE Update $\boldsymbol{h}$ by Adam algorithm with $\beta_1=0.9$,$\beta_2=0.5$.
    \STATE Calculate $ E(\boldsymbol{h})$ by Eq.~\eqref{eq:EH}
    \IF{$ E(\boldsymbol{h})$ has not decreased for $s$ steps}
    \STATE $N\leftarrow 2\times N$;\quad $lr\leftarrow 0.8\times lr$
    \ENDIF
   \UNTIL{$E(\boldsymbol{h})<\tau$}
   \STATE OT map $g(\cdot)\leftarrow \nabla(\max_{i}\{\langle \boldsymbol{x}_T,\boldsymbol{x}_M^i\rangle_F  + h_i\})$.
\end{algorithmic}
\end{algorithm}

\subsection{Sampling Algorithm} \label{sec:sampling}
The overall framework of our \textbf{DPM-OT} is shown in Fig.~\ref{framework}, which includes an optimal trajectory from $\boldsymbol{x}_T$ to $\boldsymbol{x}_{M}$ and an $M$-step inverse diffusion process gradually pushing latent variable $\boldsymbol{x}_M$ onto the target data manifold. We summarize our framework in \textbf{Algorithm~\ref{alg:OTMap}} and \textbf{Algorithm~\ref{alg:sampling}}. Specifically, to sample high-quality images, we need to compute the SDOT map $g:\boldsymbol{x}_T\rightarrow \boldsymbol{x}_M$ by \textbf{Algorithm~\ref{alg:OTMap}}, and then use \textbf{Algorithm~\ref{alg:sampling}} to generate images. 

Given the target dataset $\boldsymbol{Y}=\left\{\boldsymbol{y}_i\right\}_{i\in\mathcal{I}}$, our goal is to efficiently generate high-quality images distributed on the manifold represented by $\boldsymbol{Y}$. Intuitively, each point $\boldsymbol{y}_i$ in $\boldsymbol{Y}$ is distributed on the target manifold discretely, so the manifold induced by $\boldsymbol{Y}$ is often discontinuous, especially on the class boundary. Unfortunately, in the generation task, what the community cares about is the coverage and alignment of the entire target data manifold, while the properties of the manifold itself such as the discrete nature of the boundaries are often ignored. As a result, existing DPMs try to approximate a continuous diffusion process with a deep neural network, which leads to mode mixture in their output images.

To accelerate the sampling process and avoid the mode mixture, we first utilize a well-behaved schedule $\{(b_t,\sigma_t)\}_{t=0}^{T}$ to diffuse the original data $\boldsymbol{y}$ into the latent variable $\boldsymbol{x}_{M}$ and then calculate the OT map between $\boldsymbol{x}_T$ and $\boldsymbol{x}_M$, which induces the \textbf{Optimal Trajectory} from $\boldsymbol{x}_T$ to $\boldsymbol{x}_M$.  Considering that both $\boldsymbol{x}_T$ and $\boldsymbol{x}_M$ are matrices, we set the Brenier's potential $\boldsymbol{u}_{\boldsymbol{h}}=\max_{i\in\mathcal{I}}\{ \langle\boldsymbol{x}_T,\boldsymbol{x}_M^i\rangle_F + h_i\}$, where $\langle \cdot,\cdot\rangle_F$ denotes Frobenius inner product. In \textbf{Algorithm~\ref{alg:OTMap}}, we use the Monte Carlo method to solve the SDOT map. For better convergence, we double the number of samples $N$ and multiply the learning rate $lr$ by 0.8 when the energy function $E(\boldsymbol{h})$ has not decreased for $s$ steps.

After obtaining the SDOT map $g(\cdot)$, we sample with the help of an off-shelf pre-trained model and summarize the process in \textbf{Algorithm~\ref{alg:sampling}}. With the optimal trajectory, our \textbf{DPM-OT Sampler} can perform sampling quickly and with few mode mixture. Given different schedule $\{(b_t,\sigma_t)\}_{t=0}^{T}$, our framework \textbf{DPM-OT} will be instantiated into different fast DPM models. Specifically, if we use the Langevin dynamic in NCSNv2 \cite{song2020improved} to instantiate the diffusion process, we can obtain the following sampling process:
\begin{equation}\label{eq:Langevin-sap}
    \boldsymbol{x}_{t-1} = \boldsymbol{x}_{t}+\sigma^{2}(t)\mathbf{s}_\theta(\boldsymbol{x}_t,t) + \sigma(t)\boldsymbol{z}, \  t=M,...,1.
\end{equation}


\begin{algorithm}[tb]
   \caption{\textbf{DPM-OT} Sampling}
   \label{alg:sampling}
\begin{algorithmic}
   \REQUIRE Reverse diffusion steps $M$, OT map $g(\cdot)$, a well-trained $\mathbf{s}_\theta$ and a well-behaved schedule of $\{(b_t,\sigma_t)\}_{t=0}^{T}$.
   \ENSURE Generated image $\boldsymbol{x}_0$.
   \STATE Sample $\boldsymbol{x}_T\sim \mathcal{N}(0,I)$
   \STATE $\boldsymbol{x}_M = g(\boldsymbol{x}_T)$
   \FOR{$t=M$ {\bfseries to} $1$}
   \STATE $\boldsymbol{z}\sim \mathcal{N}(0,I)$ if $t > 1$, else $\boldsymbol{z}=0$ 
   \STATE $\boldsymbol{x}_{t-1} = \boldsymbol{x}_{t}-[b(\boldsymbol{x},t)-\sigma^{2}(t)\mathbf{s}_\theta(\boldsymbol{x}_t,t)] + \sigma(t)\boldsymbol{z}$
   \ENDFOR
   \STATE {\bf return} $\boldsymbol{x}_0$
\end{algorithmic}
\end{algorithm}

\subsection{Error Analysis} \label{sec:analysis}
 In this section, we analyze the error bound of \textbf{DPM-OT}. First, we prove that the single-step error is controllable. Then, we give the upper bound of the error between the generated data distribution and the target data distribution.
\begin{theorem}\label{theorem:oneerro}
Let $\Tilde{\boldsymbol{x}}_t$ and $\boldsymbol{x}_t$ be the samples of step $t$ obtained by \textbf{DPM-OT} and forward diffusion respectively, and $t\leqslant M$, $\boldsymbol{\zeta}_M$ be the error at step $M$ induced by optimal trajectory, then there is a constant $C_t>0$ satisfies 
\begin{equation}
    \left\| \boldsymbol{\Tilde{x}}_t-\boldsymbol{x}_t \right\|\leqslant C_t \left\| \boldsymbol{\zeta}_M \right\|.
\end{equation} 
\end{theorem}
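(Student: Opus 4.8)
The plan is to read the claim as a discrete stability estimate for the reverse-diffusion recursion and to propagate the initial discrepancy $\boldsymbol{\zeta}_M$ forward in the sampling direction (decreasing $t$) through the single-step map. Both $\Tilde{\boldsymbol{x}}_t$ and $\boldsymbol{x}_t$ are produced by the \emph{same} parameterized reverse step $f_t(\cdot,\boldsymbol{z})$ of Eq.~\eqref{eq:inverseDSDE} (equivalently Algorithm~\ref{alg:sampling}): DPM-OT starts from $\Tilde{\boldsymbol{x}}_M=g(\boldsymbol{x}_T)$, whereas the forward-diffusion reference satisfies the same recursion starting from the true latent $\boldsymbol{x}_M$ — this is the time-reversal identity that is the defining property of DPMs, so along a coupling with the \emph{same} noise realization $\boldsymbol{z}$ at every step the forward latents $\{\boldsymbol{x}_t\}$ are themselves a realization of the reverse dynamics. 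The two trajectories therefore differ only in the initial condition, with $\Tilde{\boldsymbol{x}}_M-\boldsymbol{x}_M=\boldsymbol{\zeta}_M$. First I would fix this common noise, so that the stochastic increment $\sigma(t)\boldsymbol{z}$ is identical in both updates and cancels in the difference, leaving a purely deterministic (pathwise) recursion.

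Second, I would estimate the Lipschitz constant of $f_t(\boldsymbol{x},\boldsymbol{z})=\boldsymbol{x}-[b(\boldsymbol{x},t)-\sigma^{2}(t)\mathbf{s}_\theta(\boldsymbol{x},t)]\Delta t+\sigma(t)\boldsymbol{z}$ in its state argument. Subtracting the two updates gives
\[
\Tilde{\boldsymbol{x}}_{t-1}-\boldsymbol{x}_{t-1}=(\Tilde{\boldsymbol{x}}_t-\boldsymbol{x}_t)-\Delta t\,[b(\Tilde{\boldsymbol{x}}_t,t)-b(\boldsymbol{x}_t,t)]+\sigma^{2}(t)\Delta t\,[\mathbf{s}_\theta(\Tilde{\boldsymbol{x}}_t,t)-\mathbf{s}_\theta(\boldsymbol{x}_t,t)].
\]
Under the standard assumption that the drift $b(\cdot,t)$ and the trained score $\mathbf{s}_\theta(\cdot,t)$ are Lipschitz in the spatial variable, with moduli $L_b(t)$ and $L_s(t)$, the triangle inequality yields the one-step bound $\norm{\Tilde{\boldsymbol{x}}_{t-1}-\boldsymbol{x}_{t-1}}\leqslant L_t\,\norm{\Tilde{\boldsymbol{x}}_t-\boldsymbol{x}_t}$ with $L_t:=1+\bigl(L_b(t)+\sigma^{2}(t)L_s(t)\bigr)\Delta t$.

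Finally I would unroll the recursion from step $M$ down to step $t$. Composing the per-step estimates gives
\[
\norm{\Tilde{\boldsymbol{x}}_t-\boldsymbol{x}_t}\leqslant\left(\prod_{s=t+1}^{M}L_s\right)\norm{\Tilde{\boldsymbol{x}}_M-\boldsymbol{x}_M}=C_t\,\norm{\boldsymbol{\zeta}_M},
\]
so setting $C_t:=\prod_{s=t+1}^{M}L_s$ — a finite positive constant depending only on the schedule $\{(b_t,\sigma_t)\}$ and the Lipschitz moduli, not on the particular sample — establishes the claim. The main obstacle, and the point where an assumption must be made explicit, is the Lipschitz regularity of the learned score $\mathbf{s}_\theta$: unlike the drift $b$, which is fixed by the schedule, $\mathbf{s}_\theta$ is a neural network, so finiteness of $L_s(t)$ (hence of $C_t$) has to be assumed or argued from the architecture. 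A secondary subtlety is the noise coupling: the estimate is pathwise only when both trajectories share $\boldsymbol{z}$; without this coupling one obtains instead an analogous bound in expectation, with the independent increments contributing an additive variance term that slightly alters the form of the constant.
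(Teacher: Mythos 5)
Your proof is correct under the hypotheses you make explicit, but it takes a genuinely different route from the paper's. The paper likewise identifies both trajectories with the composition $f_{t}\circ\cdots\circ f_{M-1}$ applied to $\boldsymbol{x}_M+\boldsymbol{\zeta}_M$ and to $\boldsymbol{x}_M$ respectively, but its entire proof is a one-shot continuity argument: it asserts that, because the reverse maps $\{f_t\}$ are continuous, there exists a continuous \emph{scalar} function $c_t(\cdot):\mathbb{R}^d\to\mathbb{R}$ with
\begin{equation*}
f_{t}\circ\cdots\circ f_{M-1}(\boldsymbol{x}_M+\boldsymbol{\zeta}_M)=f_{t}\circ\cdots\circ f_{M-1}(\boldsymbol{x}_M)+c_t(\boldsymbol{\zeta}_M)\,\boldsymbol{\zeta}_M,
\end{equation*}
and then bounds $|c_t(\boldsymbol{\zeta}_M)|\leqslant C_t$. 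Your argument replaces this with the standard discrete stability estimate: a per-step Lipschitz bound on $f_t$ (obtained from Lipschitz regularity of the drift $b$ and of the learned score $\mathbf{s}_\theta$), composed multiplicatively into the explicit constant $C_t=\prod_{s=t+1}^{M}L_s$. The comparison favors you on rigor. The paper's scalar representation is not actually justified, since the difference vector need not be collinear with $\boldsymbol{\zeta}_M$; the uniform bound $|c_t(\boldsymbol{\zeta}_M)|\leqslant C_t$ does not follow from continuity alone (and without uniformity the claim is nearly vacuous, because for one fixed $\boldsymbol{\zeta}_M$ one may always take $C_t$ to be the realized ratio of norms); and the paper never addresses the stochastic term $\sigma(t)\boldsymbol{z}$ inside $f_t$ at all, whereas your common-noise coupling is exactly what makes the pathwise cancellation legitimate. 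Note also that both proofs rest on the same silent premise, which you at least flag: that the forward-diffusion latents $\{\boldsymbol{x}_t\}_{t\leqslant M}$ coincide with the reverse recursion run from $\boldsymbol{x}_M$, which holds pathwise only under that coupling (otherwise one gets the weaker in-expectation bound you describe). What the paper's version buys is brevity and nominally weaker hypotheses (continuity rather than Lipschitzness of the network); what yours buys is an explicit, sample-independent constant and an honest statement of the assumptions under which the theorem is true.
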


Since the weak solution $\boldsymbol{u}_n$ converge to the real solution $\boldsymbol{u}_{\boldsymbol{h}}$ \cite{figalli2017monge,an2019ae,lei2020geometric} and the OT map $g(\cdot)$ is obtained by the Monte Carlo method, its error $\boldsymbol{\zeta}_M$ is $O(N^{-\frac{1}{2}})$ according to \textbf{Theorem 2.1} of ~\cite{caflisch1998monte}. So we can find a small enough error $\boldsymbol{\zeta}_M$ to make $\left\|\boldsymbol{\Tilde{x}_t}-\boldsymbol{x_t} \right\|\leqslant \delta$ hold for any given error bound $\delta>0$, which indicates the error of \textbf{DPM-OT} is  controllable. Building upon \textbf{Theorem~\ref{theorem:oneerro}}, we obtain \textbf{Theorem~\ref{theorem:errobound1}} and give their proofs in \textbf{supplementary material}.
\begin{theorem}\label{theorem:errobound1}
    Let $L_{dpm\_ot}$ be the error between the data distribution generated by \textbf{DPM-OT} and the target data distribution which is defined in Eq.~\eqref{eq:erroDPMOT}, $L_{vlb}$ is the variational lower bound on negative log-likelihood between data distribution generated by vanilla DPM and the target data distribution which is defined in Eq.~\eqref{eq:loss}. We have $L_{dpm\_ot} \leqslant L_{vlb}$. Hence, $L_{vlb}$ is the upper bound of $L_{dpm\_ot}$.
    \begin{equation}\label{eq:erroDPMOT}
    \begin{split}
    L_{dpm\_ot}&=L_0 + L_1 + ... + L_M+L_{T}\\
    &=-\log \tilde{p}_{\theta}(\boldsymbol{x}_0 | \boldsymbol{x}_1)+D_{KL}(q(\boldsymbol{x}_T | \boldsymbol{x}_0)),p(\boldsymbol{x}_T))\\
    &+\sum_{t=1}^{M-1} D_{KL}(q(\boldsymbol{x}_t|\boldsymbol{x}_{t+1},\boldsymbol{x}_0)|| \tilde{p}_{\theta}(\boldsymbol{x}_t|\boldsymbol{x}_{t+1}))\\
    &+D_{KL}(q(\boldsymbol{x}_M|\boldsymbol{x}_T,\boldsymbol{x}_0)|| \tilde{p}_{\theta}(\boldsymbol{x}_M|\boldsymbol{x}_T)),
    \end{split}
    \end{equation}
     \vspace{-2mm}
\end{theorem}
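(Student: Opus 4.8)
The plan is to mimic the standard variational-bound derivation that yields Eq.~\eqref{eq:loss} (as in \cite{ho2020denoising}), but for the modified generative Markov chain of \textbf{Definition~\ref{def:DPM-OT}}, and then to compare the resulting bound with $L_{vlb}$ term by term. First I would write the DPM-OT generative joint as $\tilde{p}_\theta(\boldsymbol{x}_{0:M},\boldsymbol{x}_T)=p(\boldsymbol{x}_T)\,\tilde{p}_\theta(\boldsymbol{x}_M|\boldsymbol{x}_T)\prod_{t=1}^{M}\tilde{p}_\theta(\boldsymbol{x}_{t-1}|\boldsymbol{x}_t)$, apply Jensen's inequality against the forward posterior $q(\boldsymbol{x}_{1:T}|\boldsymbol{x}_0)$, and regroup the resulting cross-entropy terms into the reconstruction term $L_0$, the prior-matching term $L_T$, the reverse-diffusion divergences for $t=1,\dots,M-1$, and the single OT-jump term $L_M=D_{KL}(q(\boldsymbol{x}_M|\boldsymbol{x}_T,\boldsymbol{x}_0)\,\|\,\tilde{p}_\theta(\boldsymbol{x}_M|\boldsymbol{x}_T))$. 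This reproduces Eq.~\eqref{eq:erroDPMOT} and certifies $L_{dpm\_ot}$ as a genuine bound.

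Next I would subtract the two bounds. Since DPM-OT reuses the same off-the-shelf reverse kernel $f_t$ on the short segment (so $\tilde{p}_\theta=p_\theta$ there) together with the same decoder and the same prior, the term $L_0$, the term $L_T$, and the divergences indexed $t=1,\dots,M-1$ are identical in both expressions and cancel. What remains is the one inequality I actually need, namely
\begin{equation}
L_M \;\leqslant\; \sum_{t=M+1}^{T} D_{KL}\!\left(q(\boldsymbol{x}_{t-1}|\boldsymbol{x}_t,\boldsymbol{x}_0)\,\big\|\,p_\theta(\boldsymbol{x}_{t-1}|\boldsymbol{x}_t)\right),
\end{equation}
i.e. that collapsing the $T-M$ discarded reverse steps into the single OT jump does not increase the divergence.

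I would establish this in two moves. By the forward Markov property the joint posterior $q(\boldsymbol{x}_{M:T-1}|\boldsymbol{x}_T,\boldsymbol{x}_0)$ and the vanilla reverse chain $p_\theta(\boldsymbol{x}_{M:T-1}|\boldsymbol{x}_T)$ both factor over the individual transitions, so the chain rule for KL divergence rewrites the right-hand sum exactly as the joint divergence $D_{KL}(q(\boldsymbol{x}_{M:T-1}|\boldsymbol{x}_T,\boldsymbol{x}_0)\,\|\,p_\theta(\boldsymbol{x}_{M:T-1}|\boldsymbol{x}_T))$. Marginalizing the intermediate latents $\boldsymbol{x}_{M+1:T-1}$ is a deterministic channel, so the data-processing inequality bounds the single-step marginal divergence $D_{KL}(q(\boldsymbol{x}_M|\boldsymbol{x}_T,\boldsymbol{x}_0)\,\|\,p_\theta(\boldsymbol{x}_M|\boldsymbol{x}_T))$ by this joint divergence. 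It then remains to replace the vanilla marginal $p_\theta(\boldsymbol{x}_M|\boldsymbol{x}_T)$ by the OT kernel $\tilde{p}_\theta(\boldsymbol{x}_M|\boldsymbol{x}_T)$: by the Brenier/Figalli guarantees invoked after \textbf{Definition~\ref{def:optimaltrajectory}}, $g_\#\,p(\boldsymbol{x}_T)=q(\boldsymbol{x}_M)$ exactly, so the OT jump reproduces the true step-$M$ marginal and incurs a divergence no larger than that of the error-accumulating vanilla marginal, which closes the chain of inequalities.

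The main obstacle is precisely this last replacement. The chain-rule and data-processing steps are routine, but linking the optimality of $g$—which minimizes a quadratic transport cost, Eq.~\eqref{eq:optimaltrajectorycost}—to a reduction in KL divergence is not automatic, since transport cost and KL are different functionals; moreover $g$ is deterministic, so $\tilde{p}_\theta(\boldsymbol{x}_M|\boldsymbol{x}_T)$ is degenerate and the conditional KL must be read carefully (at the level of the step-$M$ marginals, via the exact pushforward $g_\#\,p(\boldsymbol{x}_T)=q(\boldsymbol{x}_M)$ and the controllable single-step error of \textbf{Theorem~\ref{theorem:oneerro}}, rather than as a literal density ratio). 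I would therefore concentrate most of the effort on making this marginal-matching argument rigorous.
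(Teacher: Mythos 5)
Your route is genuinely different from the paper's, and it breaks down at two places. First, the cancellation you rely on does not exist. In the sentence immediately following the theorem, the paper defines the tilde kernels as \emph{shifted} Gaussians, $\tilde{p}_{\theta}(\boldsymbol{x}_t|\boldsymbol{x}_{t+1})=p_{\theta}(\boldsymbol{x}_t-\boldsymbol{\zeta}_t|\boldsymbol{x}_{t+1})$ with $\boldsymbol{\zeta}_t=\tilde{\boldsymbol{x}}_t-\boldsymbol{x}_t$ the error propagated from the OT jump; they are \emph{not} equal to $p_\theta$, so the terms indexed $t=1,\dots,M-1$ (and $L_0$) do not cancel against the corresponding terms of $L_{vlb}$. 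Bounding exactly these differences is the bulk of the paper's proof: writing both kernels as Gaussians with the same covariance, it shows $|\tilde{D}_{KL}-D_{KL}|\leqslant \frac{A_t}{2\sigma_{t+1}^2}\|\boldsymbol{\zeta}_t\|$ (boundedness of the data supplies $A_t$), then invokes Theorem~\ref{theorem:oneerro} to get $\|\boldsymbol{\zeta}_t\|\leqslant C_t\|\boldsymbol{\zeta}_M\|$ and sums over $t\leqslant M$.

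Second, your key replacement step fails. The semi-discrete OT map $g$ pushes $p(\boldsymbol{x}_T)$ onto the \emph{finite} set $\{\boldsymbol{x}_M^i\}_{i\in\mathcal{I}}$, and is moreover computed by Monte Carlo with error $O(N^{-1/2})$; so $g_{\#}p(\boldsymbol{x}_T)$ is a discrete measure that does not equal $q(\boldsymbol{x}_M)$ --- the Figalli results invoked after Definition~\ref{def:optimaltrajectory} give only weak convergence as the number of data points grows, not exact marginal matching. Worse, with $g$ deterministic the term $L_M$, read as a literal KL against a Dirac kernel, is $+\infty$ whenever $q(\boldsymbol{x}_M|\boldsymbol{x}_T,\boldsymbol{x}_0)$ has a density, so the quantity you propose to bound is not even finite under your reading; this is precisely why the paper defines $\tilde{p}_{\theta}(\boldsymbol{x}_M|\boldsymbol{x}_T)$ as the shifted Gaussian $p_{\theta}(\boldsymbol{x}_M-\boldsymbol{\zeta}_M|\boldsymbol{x}_T)$ rather than as the law of $g(\boldsymbol{x}_T)$. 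Your chain-rule plus data-processing reduction is sound for the vanilla marginal $p_\theta(\boldsymbol{x}_M|\boldsymbol{x}_T)$, but it then bounds the wrong object, and the hard step you defer (optimal transport cost control implying KL control) is exactly the step that cannot be carried out in this form. The paper's mechanism for the final inequality is not structural but perturbative and asymptotic: with $L_{vlb}^{0:M}$ denoting the portion of $L_{vlb}$ with $t\leqslant M$ and $L_{vlb}^{M+1:T}$ the dropped nonnegative tail, it shows $L_{dpm\_ot}\leqslant L_{vlb}^{0:M}+(M+1)\max_t\frac{A_t}{2\sigma_{t+1}^2}|C_t|\cdot\|\boldsymbol{\zeta}_M\|$, and since $\|\boldsymbol{\zeta}_M\|=O(N^{-1/2})$ can be made arbitrarily small by increasing the number of OT samples, the perturbation is absorbed by $L_{vlb}^{M+1:T}$, yielding $L_{dpm\_ot}\leqslant L_{vlb}$. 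Any repair of your argument would have to either adopt this perturbation view or randomize the OT jump so that $L_M$ becomes finite.
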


\noindent where $\tilde{p}_{\theta}(\boldsymbol{x}_M|\boldsymbol{x}_T)=p_{\theta}(\boldsymbol{x}_M-\boldsymbol{\zeta}_M|\boldsymbol{x}_T)$, $\tilde{p}_{\theta}(\boldsymbol{x}_t|\boldsymbol{x}_{t+1})=p_{\theta}(\boldsymbol{x}_t-\boldsymbol{\zeta}_t|\boldsymbol{x}_{t+1})$ and $\boldsymbol{\zeta}_t=\boldsymbol{\Tilde{x}}_t-\boldsymbol{x}_t$ for $t=0$ to $t=M-1$. \textbf{Theorem~\ref{theorem:errobound1}} shows that our \textbf{DPM-OT} sampler can fit the target data distribution no worse than vanilla DPMs, which theoretically guarantees the robustness of the algorithm.

\begin{table}[t]
\caption{Sample quality measured by FID $\downarrow$ on CIFAR-10, CelebA,  and FFHQ, the number of function evaluations(NFE) is the number of times the neural network called.}
\label{tab:FID}
\begin{center}
\scalebox{0.91}{
\begin{tabular}{l|lcc}
\toprule
    Dataset &Models  &  NFE  & FID $\downarrow$ \\  
    \midrule 
    \multirow{10}{*}{\textbf{CIFAR-10}}
   & DDIM~\cite{song2020denoising} & 100 &  4.60  \\
   & NCSNv2~\cite{song2020improved}  & 1160 &  10.23   \\
   & DPM-Solver~\cite{lu2022dpm}  & 20 &  3.72  \\
   & EDM~\cite{karras2022elucidating}  & 27 & 3.73  \\
   & UniPC~\cite{zhao2023unipc}   & 10 & 3.87  \\
    \cline{2-4}
       &  & 5 & 3.78  \\
      &   & 10 &  3.61  \\
   & \textbf{DPM-OT}  & 20 &  3.33   \\
     &   & 30 &  3.12  \\
     &   & 50& \textbf{2.92}  \\
    \midrule
    \midrule
    \multirow{10}{*}{\textbf{CelebA $\mathbf{64\times 64}$}}
   &  DDIM~\cite{song2020denoising} & 100 &  6.53 \\
   &  NCSNv2~\cite{song2020improved} & 2500 &  10.23   \\
    & DPM-Solver~\cite{lu2022dpm} & 20 & 3.13  \\
    & Analytic-DDIM~\cite{bao2022analytic} & 10  & 15.62   \\
   &  Analytic-DDIM~\cite{bao2022analytic} & 50  & 6.13   \\
   \cline{2-4}
   &      & 5 & 3.30  \\
   &      & 10 &3.21  \\
   &  \textbf{DPM-OT}  & 20 & 3.12  \\
   &      & 30 & 3.01  \\
   &      & 50& \textbf{2.85}   \\
    \midrule
    \midrule
    \multirow{8}{*}{\textbf{FFHQ $\mathbf{256\times 256}$}}
   &  DPM-Solver~\cite{lu2022dpm} & 10 & 7.39  \\
   &  UniPC~\cite{zhao2023unipc} & 10 & 6.99  \\
   &  NCSNv2~\cite{song2020improved} & 6933 & 12.73   \\
   \cline{2-4}
   &    & 5 & 4.69   \\
    &   & 10 & 4.46    \\
   &  \textbf{DPM-OT}  & 20 & 4.32    \\
   &   & 30 & 4.26    \\
   &     & 50 & \textbf{4.11}    \\
    \midrule
 
\end{tabular}
}
\end{center}
\vskip -0.1in
\end{table}
 

\begin{table}[t]
\caption{Comparison of precision $\uparrow$ and recall $\uparrow$ .}
\label{tab:PR}
\begin{center}
\scalebox{0.85}{
\begin{tabular}{l|lcc}
\toprule
    Dataset & Models  & Precision $\uparrow$ & Recall $\uparrow$\\
    \midrule
    
    
    \multirow{4}{*}{\textbf{CelebA $\mathbf{64\times 64}$}}
    & DDIM~\cite{song2020denoising}    & 0.75 & 0.42\\
    &  NCSNv2~\cite{song2020improved}  & \textbf{0.85} & 0.42\\
     & DPM-Solver~\cite{lu2022dpm}    &  0.71 & 0.46\\
     & \textbf{DPM-OT}      &  \underline{0.79} & \textbf{0.78} \\
    \midrule
   
\end{tabular}
}
\end{center}
\vskip -0.2in
\end{table}

\begin{table*}
\caption{Comparison of Mode mixture in \textbf{CIFAR-10} dataset. NIG: the number of images generated, threshold: the image has a probability greater than a
given threshold on more than two categories, it is identified as mode mixture. }
\label{tab:modeMixture}
\begin{center}
\begin{tabular}{l|lcccccc}
\toprule
  \multirow{2}{*}{NIG}   & \multirow{2}{*}{models} & \multirow{2}{*}{NFE}  & \multicolumn{5}{c}{threshold}  \\
     \cmidrule{4-8}
        &  & & $\lambda=0.1$  & $\lambda=0.11$ & $\lambda=0.13$ & $\lambda=0.16$ & $\lambda=0.2$   \\
    \midrule
    49984 & DDIM &100 & 3834($7.69 \%$) &3626($7.25 \%$) &3306($6.61 \%$) &2859($5.71 \%$)  &2416($4.83 \%$)\\
    50000 & DPM-solver &20 & 3824($7.65 \%$) &3614($7.22 \%$) &3295($6.59 \%$) &2868($5.73 \%$)  &2397($4.79 \%$)    \\
    50000 & EDM &27 & 1579($3.16 \%$) &1512($3.02 \%$) &1379($2.75 \%$) &1203($2.41 \%$)  &980($1.96 \%$)  \\
    50000 & NCSNv2 & 1160 & 8876($17.75\%$) &8217($16.43 \%$) &7463($14.92 \%$) &6527($13.05 \%$)  &5552($11.10 \%$)   \\
    \midrule
      &   &5 & \textbf{699($\mathbf{1.40 \%}$)}&\textbf{743($\mathbf{1.48 \%}$)}&\textbf{678($\mathbf{1.35\%}$)} &\textbf{585($\mathbf{1.17\%}$)} &\textbf{426($\mathbf{0.85 \%}$)}   \\
    &   &10 & \textbf{778($\mathbf{1.56 \%}$)}&\textbf{746($\mathbf{1.49 \%}$)}&\textbf{674($\mathbf{1.34\%}$)} &\textbf{590($\mathbf{1.18\%}$)} &\textbf{492($\mathbf{0.98 \%}$)}   \\
    50000 & \textbf{DPM-OT} &20 & \textbf{829($\mathbf{1.66 \%}$)}&\textbf{835($\mathbf{1.67 \%}$)}&\textbf{746($\mathbf{1.49\%}$)} &\textbf{645($\mathbf{1.29\%}$)} &\textbf{527($\mathbf{1.05 \%}$)} \\
      &   &30 & \textbf{840($\mathbf{1.68 \%}$)}&\textbf{834($\mathbf{1.66 \%}$)}&\textbf{749($\mathbf{1.49\%}$)} &\textbf{645($\mathbf{1.29\%}$)} &\textbf{496($\mathbf{0.99 \%}$)}   \\
      &   &50 & \textbf{984($\mathbf{1.97 \%}$)}&\textbf{951($\mathbf{1.90 \%}$)}&\textbf{872($\mathbf{1.74\%}$)} &\textbf{760($\mathbf{1.52\%}$)} &\textbf{622($\mathbf{1.24 \%}$)}    \\
\bottomrule
\end{tabular}
\end{center}
\vskip -0.1in
\end{table*}
\begin{figure*}[h]
\centering
\subfigure[DDIM]{
\includegraphics[width=0.184\textwidth]{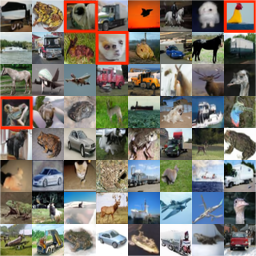}}
\subfigure[NCSNv2]{
\includegraphics[width=0.184\textwidth]{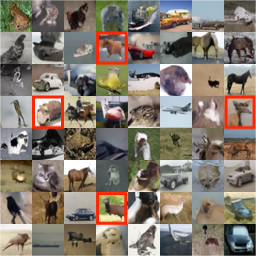}}
\subfigure[EDM]{
\includegraphics[width=0.184\textwidth]{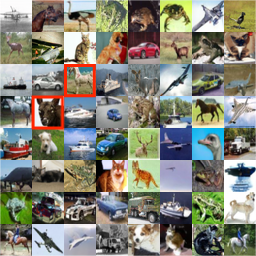}}
\subfigure[DPM-solver]{
\includegraphics[width=0.184\textwidth]{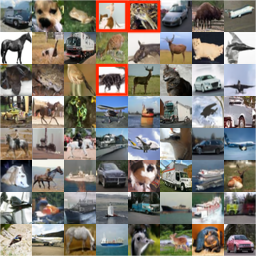}}
\subfigure[DPM-OT]{
\includegraphics[width=0.184\textwidth]{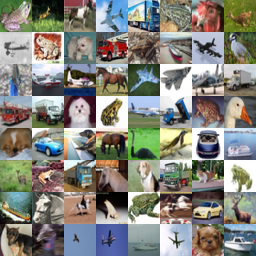}}
\setlength{\abovecaptionskip}{0cm}  
\setlength{\belowcaptionskip}{0cm} 
\caption{The visual comparison on CIFAR-10 dataset.}\label{figure:cifar}
\end{figure*}
\begin{figure*}[h]
\centering
\subfigure[DDIM]{
\includegraphics[width=0.23\textwidth]{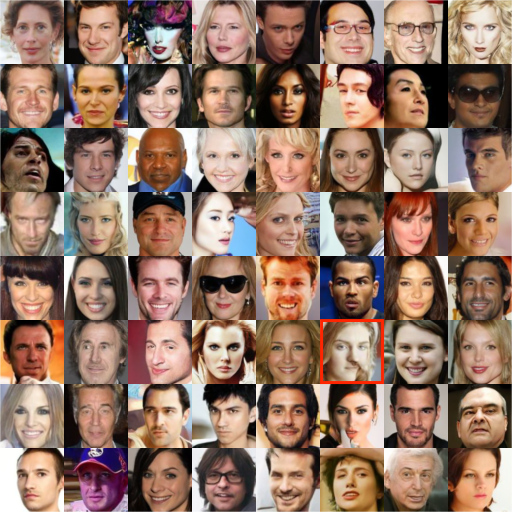}}
\subfigure[NCSNv2]{
\includegraphics[width=0.23\textwidth]{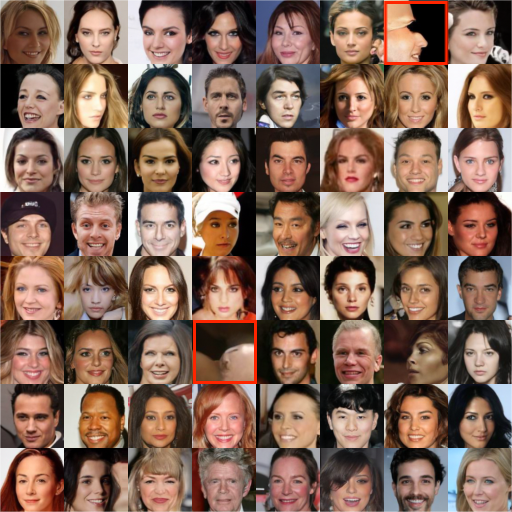}}
\subfigure[DPM-solver]{
\includegraphics[width=0.23\textwidth]{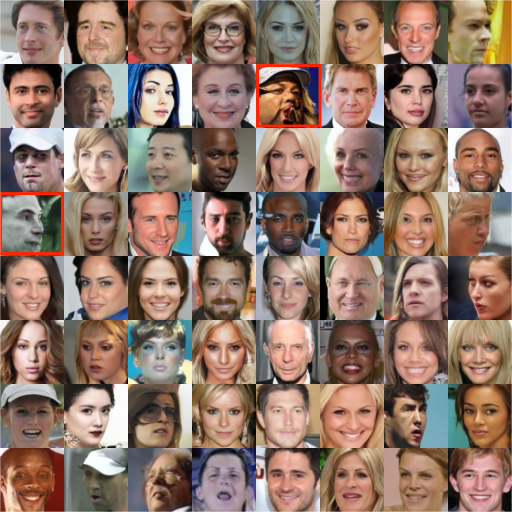}}
\subfigure[DPM-OT]{
\includegraphics[width=0.23\textwidth]{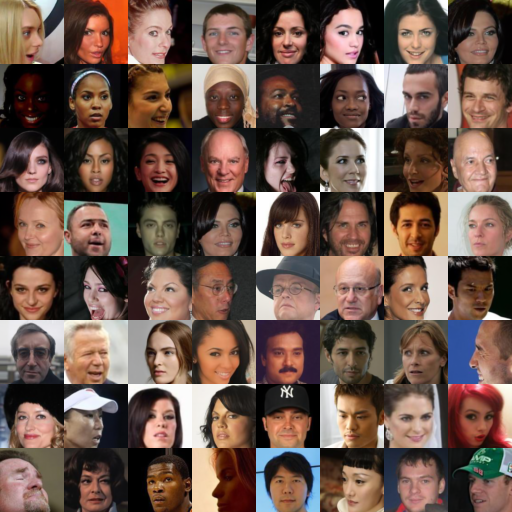}}
\setlength{\abovecaptionskip}{0cm}  
\setlength{\belowcaptionskip}{0cm} 
\caption{The visual comparison on CelebA dataset.}\label{figure:celeba}
\vskip -0.2in
\end{figure*}
\begin{figure}[ht]
\centering
\subfigure[NCSNv2]{
\includegraphics[width=0.39\textwidth]{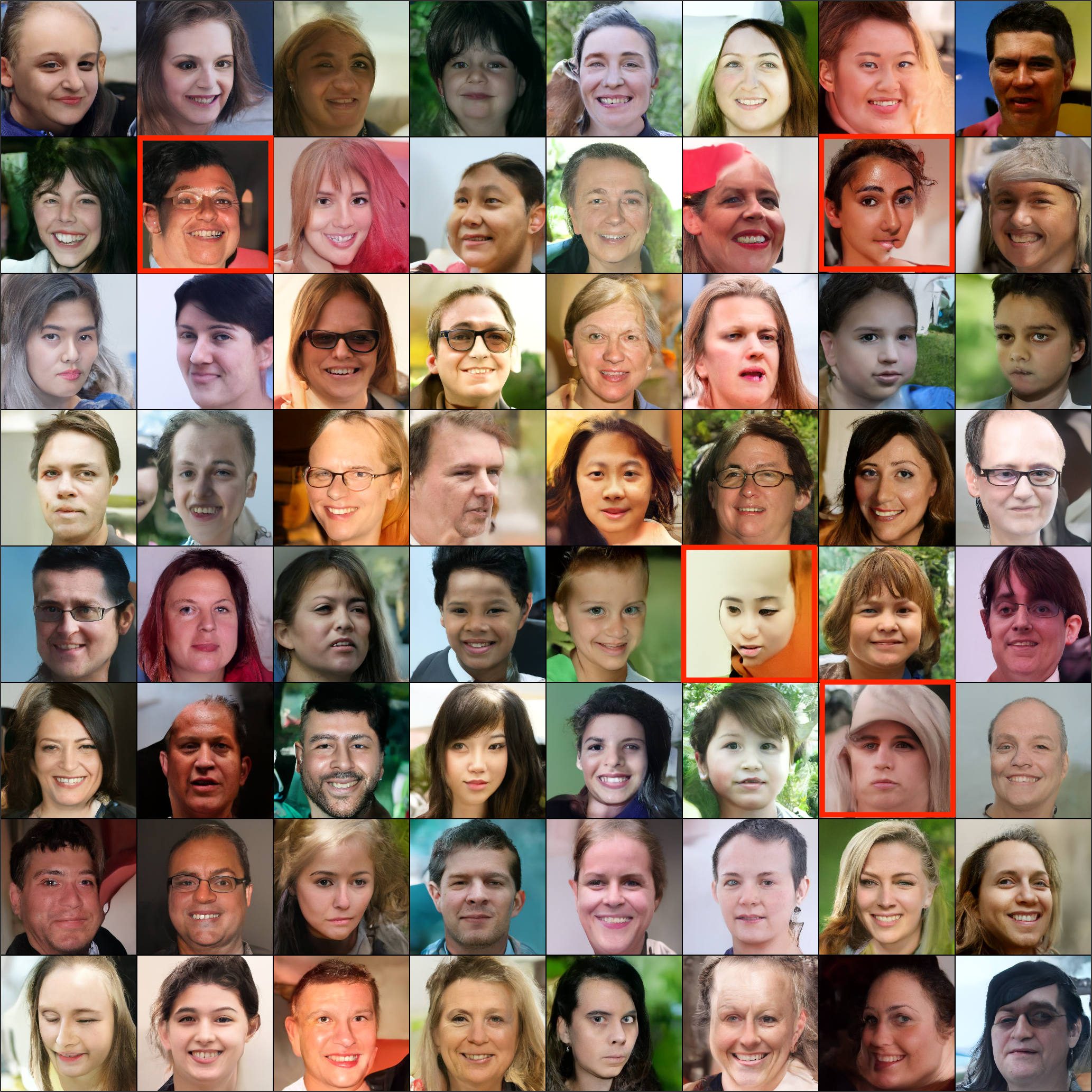}}
\subfigure[DPM-OT]{
\includegraphics[width=0.39\textwidth]{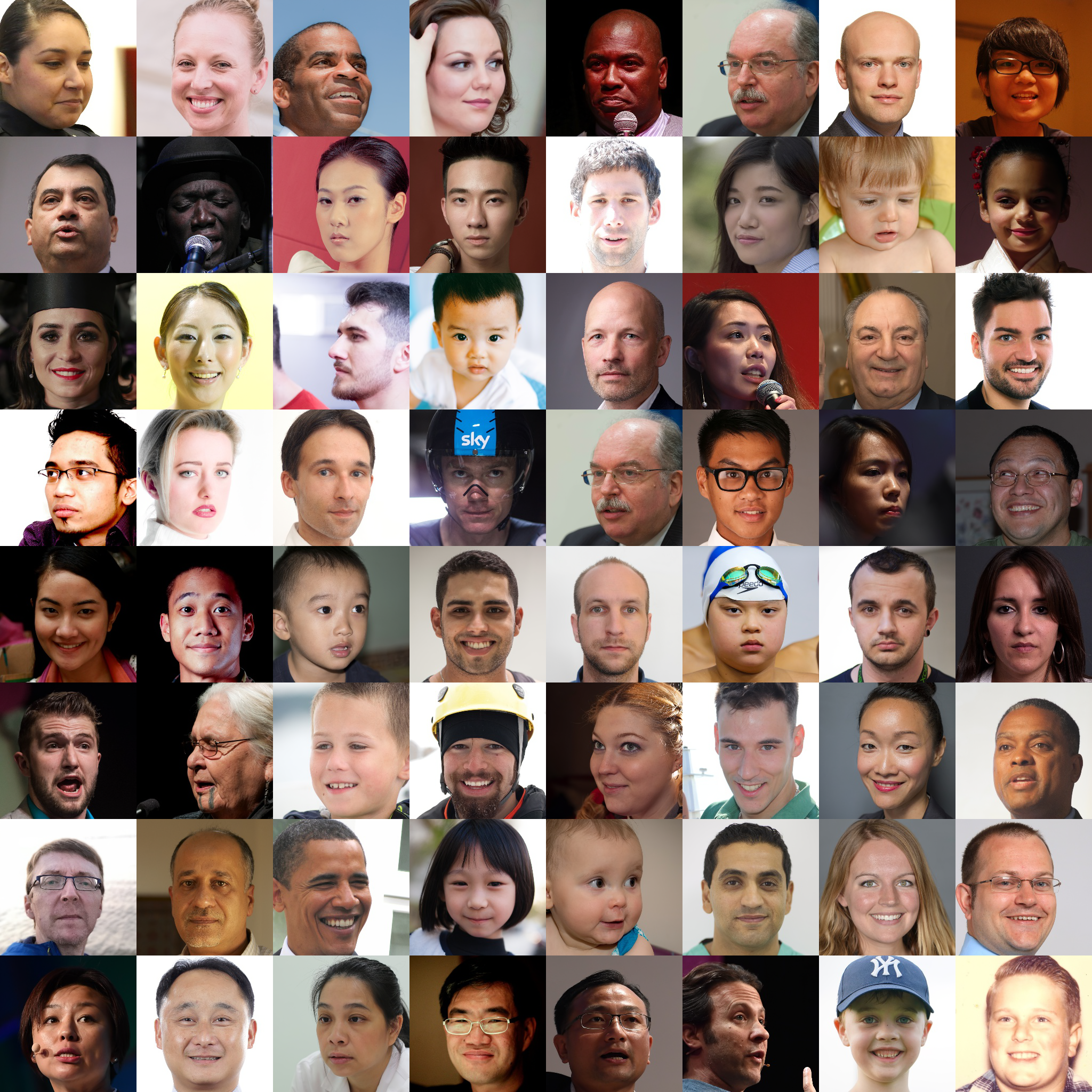}}
\setlength{\abovecaptionskip}{0cm}  
\setlength{\belowcaptionskip}{0cm} 
\caption{The visual comparison on FFHQ dataset.}\label{figure:ffhq}
\vskip -0.2in
\end{figure}
\begin{figure*}[h]
\centering
\includegraphics[width=0.85\textwidth]{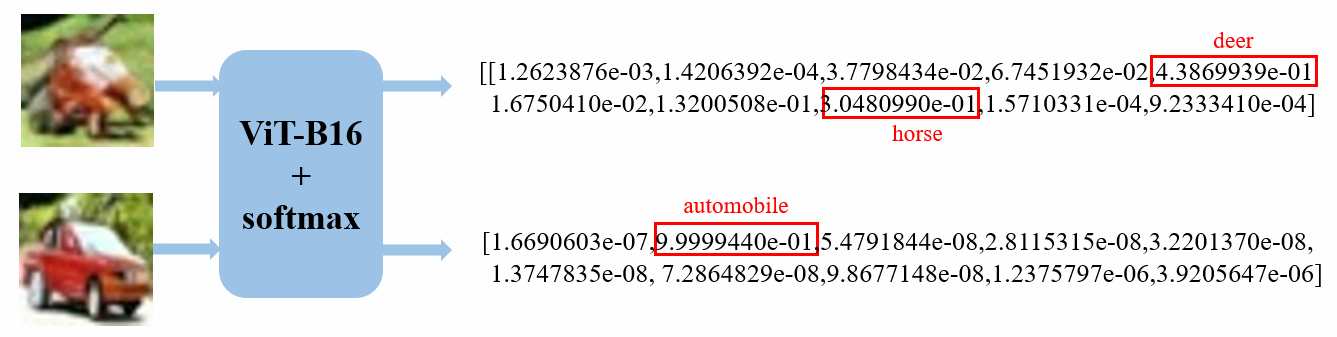}
\setlength{\abovecaptionskip}{0cm}  
\setlength{\belowcaptionskip}{0cm} 
\caption{Examples of verifying mode mixture indicator. The image of the top row and the bottom row are judged as existence and nonexistence mode mixture by the indicator in Eq.~\eqref{eq:indicative}.}\label{figure:mixexample}
\end{figure*}

\begin{figure}[ht]
\centering
\includegraphics[width=0.45\textwidth]{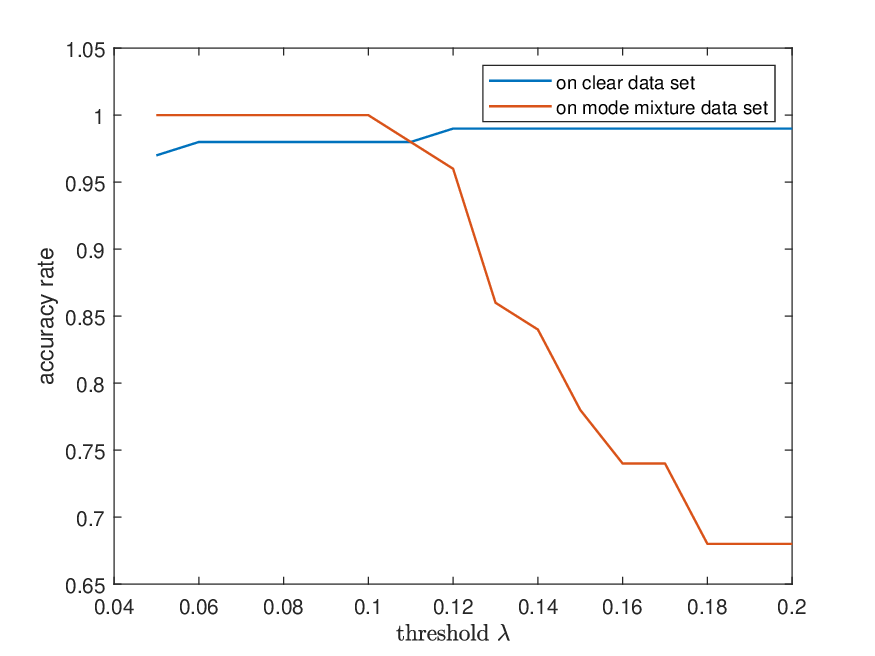}
\setlength{\abovecaptionskip}{0cm}  
\setlength{\belowcaptionskip}{0cm} 
\caption{Randomly chosen mixed and unambiguous images to obtain the mode mixture indicator results under different thresholds.}\label{figure:acc_thred}
\end{figure}

\section{Experiments}
Section \ref{sec:3} analyzes the benefits of our model from the theoretical perspective, and then we will further evaluate the performance of the model experimentally. The experimental results indicate that the \textbf{DPM-OT} has the following pros: {\textbf{1)} it can be well embedded in pre-trained diffusion models to accelerate sampling;  \textbf{2)} the generation efficiency of our model has been drastically enhanced, and high-quality images can be generated with only 5 function evaluations; 
\textbf{3)} mode mixture can be alleviated via the proposed method. 

More specifically, we instantiate our framework \textbf{DPM-OT} with pre-trained models of NCSNv2 \cite{song2020improved} on CIFAR10~\cite{krizhevsky2009learning}, CelebA~\cite{liu2015deep} and FFHQ~\cite{Karras_2019_CVPR} respectively. Compared to the NCSNv2, our method has a great improvement in sampling speed. NCSNv2 needs 1160, 2500, and 6933 function evaluations on the corresponding three datasets to get preferable images, while our model can get high-quality images with only 5 function evaluations. Furthermore, we employ FID score \cite{heusel2017gans} and the improved precision and recall metric \cite{kynkaanniemi2019improved} to assess its performance against the SOTA models on the above three datasets. These SOTA models include DDIM~\cite{song2020denoising}, Analytic-DDIM~\cite{bao2022analytic}, DPM-Solver~\cite{lu2022dpm}, EDM~\cite{karras2022elucidating}, UniPC~\cite{zhao2023unipc}.

To measure the ability of generative models to generate images with obvious categories, that is, not to produce images that do not exist in the real world with multiple categories mixed together,  we design a mode mixture indicator to judge whether the image is mixed. Exactly, for each generated image, we use a  pre-trained classification model which is the SOTAs model in the community to evaluate its probability of belonging to each category. If the image has probabilities greater than a given threshold $\lambda$ on more than two categories, it is identified with mode mixture. Given the classification model $cls(\boldsymbol{y})$=$(p_1^{\boldsymbol{y}}, p_2^{\boldsymbol{y}}, ..., p_C^{\boldsymbol{y}})$, number of categories $C$, the definition of \textbf{mode mixture indicator} is:
\begin{equation}
\label{eq:indicative}
     \mathbf{I}(\boldsymbol{y}) =  \left \{
     \begin{array}{ll}
               1, & |\{p_j^{\boldsymbol{y}} | p_j^{\boldsymbol{y}} \geq \lambda \}| \geq 2 \\
               0, & others,
     \end{array}
                  \right.j=1,\cdots,C,
\end{equation}
where $|\cdot|$ denotes the number of elements in the set,  $p_j^{\boldsymbol{y}}$ represents the probability that the image $\boldsymbol{y}$ belongs to the $j$-th category.  Then, We use mode mixture ratio (\textbf{MMR}) to measure the performance of the generative model in avoiding mode mixture, which is defined as follows:
\begin{equation}
\label{eq:MM}
    \textbf{MMR} = \frac{1}{K}\sum_{i=1}^{K} \mathbf{I}(\boldsymbol{y}_i),~~i=1,\cdots,K,
\end{equation}
where $K$ is the number of images. $\mathbf{I}(\boldsymbol{y}_i)$ indicates the prediction of whether there is a mode mixture on $i$-th image. $\textbf{MMR}$ is a simple but effective metric, which circumvents the dilemma of accuracy calculations but no labels. The lower its value, the fewer images with mode mixture.

\subsection{Sample Quality and Efficiency}
To illustrate the remarkable performance of the \textbf{DPM-OT}, we give its analysis of sampling quality and efficiency. As the universal evaluation metrics of image generation, the FID scores for our model on CIFAR-10, CelebA, and FFHQ datasets are reported in Tab. \ref{tab:FID}.  Moreover, in order to evaluate the quality of the generated models in several dimensions, we give the results of precision and recall in Tab. \ref{tab:PR} on CelebA. Precision is quantified by querying for each generated image whether the image is within the estimated manifold of real images. Symmetrically, recall is calculated by querying for each real image whether the image is within the estimated manifold of the generated image. 

Combining the above metrics, the following result analysis is provided. Results on CIFAR-10, the FID of our model outperforms other fast DPM methods, this shows that our model is able to improve the image quality while accelerating sampling speed. And compared with DDIM, NCSNv2, EDM, and DPM-solver, the images generated by our method are much sharper, while other models have some images with mode mixture. Such as a bird head with a horse body, as marked in the red box of Fig. \ref{figure:cifar}. 
Results on celebA, Tab. \ref{tab:FID} and Fig. \ref{figure:celeba} provide quality evaluation and visualization comparisons in CelebA, respectively. Our model only needs $5$ function evaluaitons to achieve excellent performance in image quality assessment and compared to the other models in Tab. \ref{tab:FID}, the \textbf{DPM-OT} is superior in terms of both speed and quality. Although DDIM, NCSNv2, and DPM-solver get highly realistic images,
their results appear to some not exist face images in the real world caused by mode mixture, such as the red box marks in Fig. \ref{figure:celeba}, while our method does not arise in this case.
Results on FFHQ, in comparison to DPM-solver and UniPC, our method attains superior results after 5 times of neural network inference. Fig. \ref{figure:celeba} shows the \textbf{DPM-OT} generated high fidelity and almost no mixture images, yet the NCSNv2 appears some deformed face images in Fig. \ref{figure:celeba} with a marked red box. 
In addition to this, The results of precision and recall of our model on CelebA are shown in Tab. \ref{tab:PR}. Compared with NCSNv2, though the precision of our model is slightly lower, the recall is over a lot, which shows that our model has enhanced in both diversity and image quality.

In summary, the quantitative evaluation and visualized results demonstrate our model has superior performance. In particular, the sampling speed has been greatly enhanced, requiring only $5-10$ NFEs to generate high-quality images.

\subsection{Validity of Mode Mixture Indicator}\label{sec:Validity}
To verify the validity of the mode mixture indicator, i.e., Eq.~\eqref{eq:indicative}, we conduct empirical analysis and give the relevant examples and results which are reported in Fig.~\ref{figure:mixexample} and Fig.~\ref{figure:acc_thred}. We randomly select an image with mode mixture and an image with significant category characteristics to obtain classification results by the pre-training model ViT-B16~\cite{dosovitskiy2020}, judging whether the image is with mode mixture or not. Fig.~\ref{figure:mixexample} shows that the probability of a horse and deer in the mixed image is $0.3$ and $0.4$, respectively, this also aligns with our visual perception which is a mixture of a horse and a deer.  While the other image can immediately be recognized as an automobile, and the output probability of the classification model is close to $1$, this also indicates that the selected classification model is effective, and the indicator defined in Eq.~\eqref{eq:indicative} can effectively detect mode mixture.

Furthermore, we further randomly collected $100$ clear images from  CIFAR-10 and $50$ images with mode mixture from generated results, the accuracy of the indicator's results are shown in Fig.~\ref{figure:acc_thred}. We found that the mode mixture indicator has a great ability to recognize clear images. Moreover, when the threshold $\lambda$ is about $0.11$, the image with mode mixture can be effectively identified. In summary of the results, it is reasonable to assume the proposed indicator is effective in the detection of mode mixture.

\subsection{Mode Mixture Analysis}
This part will demonstrate that the \textbf{DPM-OT} can mitigate mode mixture from experimental results. We devised a new metric to quantitatively assess model mixture, i.e. Eq.~\eqref{eq:MM}, which assumes that no less than two components of the probability vector are greater than the setup threshold. Furthermore, we utilize the best pre-trained classification model ViT-B16 to count the number of blended images. The results of \textbf{MMR} are reported in Tab.~\ref{tab:modeMixture}, with a nearly consistent number of images, we calculate the ratio of obfuscated images under the corresponding threshold. According to the results in Section~\ref{sec:Validity}, we know that when the value of $\gamma$ is around $0.1$, the prediction of the mode mixture indicator on clear images and mixed images will be more accurate, so we have selected five values in the interval $[0.1,0.2]$ for comparative experiments. Judging from the results, the \textbf{DPM-OT} consistently maintains excellent performance under different thresholds, which indicates our model can effectively alleviate mode mixture.

In addition, the visualized results further illustrate the ability of our model to mitigate mode mixture in Fig.~\ref{figure:cifar} $\sim$ Fig.~\ref{figure:ffhq}. The visual results on the CIFAR-10 in Fig.~\ref{figure:cifar}, (a) $\sim$ (d) appear confusing images, such as a red box marked either a variety of animal images mixed together or blurred and unclear. And for face image results on the celebA $64 \times 64$ in Fig.~\ref{figure:celeba}, DDIM has a broken face formed by mixing faces of different sizes, NCSNv2 produces a completely deformed face, DPM-solver generates strange unnatural faces.
This also demonstrates that the \textbf{DPM-OT} can better mitigate mode mixture on face images.
The results of the high-resolution face image on FFHQ $256 \times 256$ are displayed in Fig. \ref{figure:ffhq}, Although NCSNv2 generates clean face images, there are still existed missing ears and overlapping facial features. The high-quality unconfused images are generated by our model. To sum up, from the \textbf{MMR} and visual results, the output results of other models have a serious mode mixture, and the \textbf{MMR} of our model maintains a very small prediction rate, this shows that our proposed method can well mitigate this problem and also improves the quality of images.  We then briefly analyze the reason for mode mixture via optimal transport theory.

According to the regularity theory of Monge-Amp$\acute{\textrm{e}}$re, if the support set of the target distribution is non-convex, then there exists a singular set of points with zero measure and the transport mapping is interrupted at the singular points. While traditional deep neural networks (DNN) can only approximate continuous maps, this internal conflict leads to mode collapse/mixture. 
We are aware that the transformation from noise to generated images involves a transition from continuous to discrete distribution, which results in singular boundaries, and the images generated at the boundary points inevitably appear mode mixture. While existing DPMs attempt to use DNN to approximate the continuous diffusion process, which arise intrinsic conflict and cause mode mixture. Our method calculates the SDOT map from $\boldsymbol{x}_T$ to $\boldsymbol{x}_M$ to avoid using DNNs approximate continuous map, which can eliminate mode mixture at the first step of \textbf{DPM-OT} sampler. Therefore, it can effectively mitigate mode mixture of our model.

\section{Conclusion}
In this paper, we propose a novel fast sampling diffusion probability model in combination with optimal transport, i.e. \textbf{DPM-OT}, which can generate high-quality images while greatly speeding up the sampling. Our method built an optimal trajectory from the prior distribution to the target latents distribution by calculating the SDOT map between them. The optimal trajectory provides a near-perfect initial value for the subsequent diffusion process through a single-step transmission, which greatly shortens the sampling trajectory, thus improving the sampling efficiency. Moreover, the discontinuity of the SDOT map at the boundary singular point dramatically alleviates the problem of mode mixture in the generated image.  Furthermore, the error bound of the proposed method is provided, which theoretically guarantees the stability of the algorithm. To detect mode mixture without labels, an effective indicator is proposed and verified. Extensive experiments validate the proposed \textbf{DPM-OT} can generate high-quality samples with almost no mode mixture within only $5-10$ function evaluations.

\paragraph{Limitations}
One limitation of our approach is that the noisy training data samples $\boldsymbol{x}_M$ need to be stored for use at sampling time. This means additional storage requirements, although we have designed batch-processing algorithms to reduce the demand for device storage. Another limitation is that we just only consider unconditional generation here. In future research, it would be interesting to incorporate the \textbf{DPM-OT} framework into conditional synthesis tasks.

{\small
\bibliographystyle{ieee_fullname}
\bibliography{ref}
}


\newpage
\appendix
\onecolumn
In this document, we provide proof of theorems, additional implementation
details, and qualitative results. Limitations and future work are also discussed.
\section{Proof of Theorems} \label{secA}
\begin{theorem}
	\cite{lei2020geometric,villani2008optimal} Given $\mu$ and $\nu$ on a compact convex domain $\Omega \subset \mathbb{R}^{d}$, there exists an OT plan for the cost $c(\boldsymbol{x}, \boldsymbol{y})=$ $\boldsymbol{g}(\boldsymbol{x}-\boldsymbol{y})$, with $\boldsymbol{g}$ strictly convex. It is unique and of the form (id, $\left.T_{\#}\right) \mu$ (id: identity map), provided that $\mu$ is absolutely continuous with respect to Lebesgue measure and $\partial \Omega$ is negligible. Moreover, there exists a Kantorovich's potential $\varphi$, and OT map $T$ can be represented as follows:
	\begin{equation*}
		T(\boldsymbol{x})=\boldsymbol{x}-(\nabla \boldsymbol{g})^{-1}[\nabla \varphi(\boldsymbol{x})].
		\label{eq:T}
	\end{equation*}
	\label{lemma1}
\end{theorem}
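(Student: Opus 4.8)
The plan is to follow the classical Kantorovich--Gangbo--McCann programme for optimal transport with a strictly convex cost, specialized to $c(\boldsymbol{x},\boldsymbol{y})=\boldsymbol{g}(\boldsymbol{x}-\boldsymbol{y})$. First I would establish \emph{existence} of an optimal plan by the direct method of the calculus of variations. The set of couplings $\Pi(\mu,\nu)$ with the prescribed marginals is nonempty (it contains $\mu\otimes\nu$) and, since $\mu$ and $\nu$ are supported on the compact set $\Omega$, it is tight and hence weakly sequentially compact. The transport functional $\gamma\mapsto\int_{\Omega\times\Omega}c\,d\gamma$ is lower semicontinuous under weak convergence because $c$ is continuous and bounded below on the compact $\Omega\times\Omega$. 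Weierstrass' theorem then yields a minimizer $\gamma^{\ast}$.

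Next I would invoke \emph{Kantorovich duality} to obtain a $c$-conjugate pair of potentials $(\varphi,\psi)$ satisfying $\varphi(\boldsymbol{x})+\psi(\boldsymbol{y})\le c(\boldsymbol{x},\boldsymbol{y})$ everywhere, with equality $\gamma^{\ast}$-almost everywhere. Here $\varphi$ may be chosen $c$-concave, which forces it to be locally Lipschitz on the interior of $\Omega$ since $\boldsymbol{g}$ is locally Lipschitz and strictly convex. The heart of the argument is the \emph{first-order condition}: for $\gamma^{\ast}$-a.e.\ pair $(\boldsymbol{x},\boldsymbol{y})$ one has $\varphi(\boldsymbol{x}')\le c(\boldsymbol{x}',\boldsymbol{y})-\psi(\boldsymbol{y})$ for all $\boldsymbol{x}'$, with equality at $\boldsymbol{x}'=\boldsymbol{x}$, so $\boldsymbol{x}$ maximizes $\boldsymbol{x}'\mapsto\varphi(\boldsymbol{x}')-c(\boldsymbol{x}',\boldsymbol{y})$. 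Differentiating at a point where $\varphi$ is differentiable gives $\nabla\varphi(\boldsymbol{x})=\nabla_{\boldsymbol{x}}c(\boldsymbol{x},\boldsymbol{y})=\nabla\boldsymbol{g}(\boldsymbol{x}-\boldsymbol{y})$.

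Because $\boldsymbol{g}$ is strictly convex, $\nabla\boldsymbol{g}$ is strictly monotone, hence injective, and its inverse $(\nabla\boldsymbol{g})^{-1}$ is well defined on its range; I can therefore solve for $\boldsymbol{y}$:
\begin{equation*}
\boldsymbol{x}-\boldsymbol{y}=(\nabla\boldsymbol{g})^{-1}[\nabla\varphi(\boldsymbol{x})],\qquad\text{so}\qquad \boldsymbol{y}=\boldsymbol{x}-(\nabla\boldsymbol{g})^{-1}[\nabla\varphi(\boldsymbol{x})]=:T(\boldsymbol{x}).
\end{equation*}
This shows that $\boldsymbol{y}$ is uniquely determined by $\boldsymbol{x}$ at every differentiability point of $\varphi$, so $\gamma^{\ast}$ is concentrated on the graph of $T$, i.e.\ $\gamma^{\ast}=(\mathrm{id},T)_{\#}\mu$ and $T_{\#}\mu=\nu$. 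For \emph{uniqueness} I would observe that the same first-order condition forces \emph{every} optimal plan to be concentrated on the graph of this same $T$ (the $c$-concave potential is essentially unique on the supports that matter), whence the optimal plan, and the induced map, is unique.

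The step I expect to be the main obstacle is justifying the differentiation ``$\mu$-almost everywhere.'' This requires, first, that the $c$-concave potential $\varphi$ be differentiable $\mu$-a.e.: I would obtain this from its local Lipschitz/semiconcavity regularity together with Rademacher's theorem, upgrading Lebesgue-a.e.\ differentiability to $\mu$-a.e.\ precisely because $\mu\ll\mathcal{L}^{d}$. Second, it requires controlling the behaviour at $\partial\Omega$, where the one-sided touching condition need not produce an interior gradient; this is exactly why the hypothesis that $\partial\Omega$ is negligible is imposed. A secondary technical point is confirming that $(\nabla\boldsymbol{g})^{-1}$ is well defined on the set of gradients $\nabla\varphi(\boldsymbol{x})$ that actually occur, which again follows from the strict convexity of $\boldsymbol{g}$.
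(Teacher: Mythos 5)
The paper does not actually prove this theorem: it is quoted as a known result from the cited references (Villani's monograph and Lei et al.) and serves purely as a black box justifying the Brenier-potential representation $g^{*}=\nabla\boldsymbol{u}$ used in Section 2.2. So your reconstruction can only be judged against the literature, and on that score it is the standard Gangbo--McCann argument, correctly organized: direct method for existence of a plan, Kantorovich duality, the first-order condition $\nabla\varphi(\boldsymbol{x})=\nabla\boldsymbol{g}(\boldsymbol{x}-\boldsymbol{y})$ at differentiability points of the $c$-concave potential, inversion of $\nabla\boldsymbol{g}$ by strict monotonicity, and Rademacher's theorem plus $\mu\ll\mathcal{L}^{d}$ plus negligibility of $\partial\Omega$ to make the ``a.e.'' differentiation legitimate --- these are indeed exactly the points where the hypotheses enter. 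Two steps deserve tightening. For uniqueness, you should not appeal to ``essential uniqueness of the potential'' (which is delicate and unnecessary): instead fix \emph{one} optimal dual pair $(\varphi,\psi)$; strong duality gives $\int(c-\varphi\oplus\psi)\,d\gamma=0$ for \emph{every} optimal plan $\gamma$, and since the integrand is nonnegative, every optimal plan is concentrated on the contact set $\{\varphi(\boldsymbol{x})+\psi(\boldsymbol{y})=c(\boldsymbol{x},\boldsymbol{y})\}$, which your first-order argument shows is a graph over the differentiability points of $\varphi$; uniqueness of the plan follows with no statement about potentials at all. Second, strict convexity of $\boldsymbol{g}$ alone does not give differentiability of $\boldsymbol{g}$, so in general the first-order condition reads $\nabla\varphi(\boldsymbol{x})\in\partial\boldsymbol{g}(\boldsymbol{x}-\boldsymbol{y})$ and the inversion is through the Legendre conjugate, $\boldsymbol{y}=\boldsymbol{x}-\nabla\boldsymbol{g}^{\ast}[\nabla\varphi(\boldsymbol{x})]$ (strict convexity of $\boldsymbol{g}$ is what makes $\boldsymbol{g}^{\ast}$ differentiable); this coincides with the stated formula $(\nabla\boldsymbol{g})^{-1}$ when $\boldsymbol{g}$ is smooth, which is the reading the theorem implicitly takes.
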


\begin{theorem}\label{theorem:oneerro}
	Let $\Tilde{\boldsymbol{x}}_t$ and $\boldsymbol{x}_t$ be the samples of step $t$ obtained by \textbf{DPM-OT} and forward diffusion respectively, and $t\leq M$, $\boldsymbol{\zeta}_M$ be the error at step $M$ induced by optimal trajectory, then there is a constant $C_t>0$ satisfies the following inequality.
	\begin{equation}
		\left\| \boldsymbol{\Tilde{x}}_t-\boldsymbol{x}_t \right\|\leq C_t \left\| \boldsymbol{\zeta}_M \right\|
	\end{equation} 
\end{theorem}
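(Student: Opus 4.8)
The plan is to write both samples as the image of a single composite reverse-diffusion map applied to two nearby initial latents, and then to control their difference by the Lipschitz constant of that composite map. Since an exact OT map would send $\boldsymbol{x}_T$ to the true step-$M$ latent $\boldsymbol{x}_M$, the Monte Carlo approximation instead produces $\tilde{\boldsymbol{x}}_M = \boldsymbol{x}_M + \boldsymbol{\zeta}_M$. Writing $\Phi_t = f_{t+1}\circ\cdots\circ f_M$ for the composition of the reverse-diffusion maps from step $M$ down to step $t$, I would express $\boldsymbol{x}_t = \Phi_t(\boldsymbol{x}_M)$ and $\tilde{\boldsymbol{x}}_t = \Phi_t(\boldsymbol{x}_M + \boldsymbol{\zeta}_M)$, using the same noise realization $\boldsymbol{z}$ at each step for both trajectories (the natural coupling), so that the stochastic increments cancel upon subtraction.

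First I would establish a per-step Lipschitz bound. Writing each step as $f_t(\boldsymbol{x}, \boldsymbol{z}) = \boldsymbol{x} - b(\boldsymbol{x}, t) + \sigma^2(t)\mathbf{s}_\theta(\boldsymbol{x}, t) + \sigma(t)\boldsymbol{z}$, subtracting two evaluations with the same $\boldsymbol{z}$ removes the noise term and leaves
\[
f_t(\boldsymbol{x}, \boldsymbol{z}) - f_t(\boldsymbol{x}', \boldsymbol{z}) = (\boldsymbol{x} - \boldsymbol{x}') - \bigl(b(\boldsymbol{x}, t) - b(\boldsymbol{x}', t)\bigr) + \sigma^2(t)\bigl(\mathbf{s}_\theta(\boldsymbol{x}, t) - \mathbf{s}_\theta(\boldsymbol{x}', t)\bigr).
\]
Assuming the drift $b(\cdot, t)$ and the trained score $\mathbf{s}_\theta(\cdot, t)$ are Lipschitz with constants $L_b^{(t)}$ and $L_s^{(t)}$, the triangle inequality yields $\norm{f_t(\boldsymbol{x}, \boldsymbol{z}) - f_t(\boldsymbol{x}', \boldsymbol{z})} \leq \kappa_t \norm{\boldsymbol{x} - \boldsymbol{x}'}$ with $\kappa_t = 1 + L_b^{(t)} + \sigma^2(t)L_s^{(t)}$.

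Next I would chain these estimates through the composition. Applying the per-step bound telescopically from step $M$ down to step $t$ gives $\norm{\tilde{\boldsymbol{x}}_t - \boldsymbol{x}_t} = \norm{\Phi_t(\boldsymbol{x}_M + \boldsymbol{\zeta}_M) - \Phi_t(\boldsymbol{x}_M)} \leq \bigl(\prod_{s=t+1}^{M}\kappa_s\bigr)\norm{\boldsymbol{\zeta}_M}$, so the claim holds with $C_t = \prod_{s=t+1}^{M}\kappa_s$, a finite product of positive factors and hence a positive constant independent of $\boldsymbol{\zeta}_M$. This recovers the heuristic decomposition $\tilde{\boldsymbol{x}}_t = \boldsymbol{x}_t + c_t(\boldsymbol{\zeta}_M)\boldsymbol{\zeta}_M$ with an explicit, honest constant in place of the unquantified scalar $c_t$.

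The hard part is the Lipschitz regularity of the score network $\mathbf{s}_\theta$: a generic neural network is not globally Lipschitz, so a fully rigorous argument needs either an architectural Lipschitz bound or a restriction to the compact region in which the latents and the small error $\boldsymbol{\zeta}_M$ live, where $\mathbf{s}_\theta$ and $b$ are Lipschitz by smoothness and compactness. Invoking mere continuity (as in the sketch that treats $c_t(\boldsymbol{\zeta}_M)$ as a bounded scalar) quietly presupposes exactly this boundedness; hence the one step I would be careful to state rather than gloss over is the standing Lipschitz hypothesis on $b$ and $\mathbf{s}_\theta$, or equivalently a compactness argument confining both trajectories.
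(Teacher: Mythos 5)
Your proof is correct (under the Lipschitz hypothesis you state) and it is, in execution, a genuinely more careful argument than the paper's own. The paper proves this theorem by asserting that, because the composition $f_t \circ \cdots \circ f_{M-1}$ is continuous, there exists a continuous \emph{scalar} function $c_t(\cdot)$ with $\tilde{\boldsymbol{x}}_t = \boldsymbol{x}_t + c_t(\boldsymbol{\zeta}_M)\boldsymbol{\zeta}_M$, and then bounds $|c_t(\boldsymbol{\zeta}_M)| \leq C_t$. That argument has two soft spots which your version repairs: first, the difference of a nonlinear map at two nearby points is generally not parallel to the perturbation $\boldsymbol{\zeta}_M$, so the scalar-factor decomposition is not available without further justification (a mean-value or Lipschitz argument is what actually underlies it); second, continuity of $c_t$ alone does not yield a uniform bound $C_t$ --- that requires exactly the boundedness/compactness you make explicit. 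Your per-step Lipschitz chaining, with the explicit constant $C_t = \prod_{s=t+1}^{M} \kappa_s$ where $\kappa_s = 1 + L_b^{(s)} + \sigma^2(s) L_s^{(s)}$, supplies the missing quantitative mechanism, and your remark about coupling the two trajectories with the same noise realization $\boldsymbol{z}$ makes precise something the paper uses silently (otherwise $\tilde{\boldsymbol{x}}_t$ and $\boldsymbol{x}_t$ could not be written as the same deterministic composition evaluated at two points). What the paper's route buys is brevity; what yours buys is an honest statement of the hypotheses (Lipschitz regularity of $b$ and $\mathbf{s}_\theta$, or confinement of both trajectories to a compact set) under which the theorem is actually true, which is the right way to present it.
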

\begin{proof}
	Since the reverse diffusion function sequence $\left\{f_t\right\}$ is continuous, there is continuous function $c_t(\cdot)$ from $\mathbb{R}^{d}$ to $\mathbb{R}$ that makes the following formula hold
	\begin{equation}
		\begin{split}
			\Tilde{\boldsymbol{x}}_t&=f_{t}\circ\cdots\circ f_{M-1}(\boldsymbol{x}_M+\boldsymbol{\zeta}_M)\\
			&=f_{t}\circ\cdots\circ f_{M-1}(\boldsymbol{x}_M)+c_t(\boldsymbol{\zeta}_M)\boldsymbol{\zeta}_M\\
			&=\boldsymbol{x}_t+c_t(\boldsymbol{\zeta}_M)\boldsymbol{\zeta}_M
		\end{split}
	\end{equation}
	So we can get
	\begin{equation*}        
		\left\| \Tilde{\boldsymbol{x}}_t-\boldsymbol{x}_t \right\|
		=\left\|c_t(\boldsymbol{\zeta}_M)\boldsymbol{\zeta}_M\right\|
		\leq|c_t(\boldsymbol{\zeta}_M)|\left\|\boldsymbol{\zeta}_M\right\|
		\leq C_t\left\|\boldsymbol{\zeta}_M \right\|       
	\end{equation*}
\end{proof}

\begin{theorem}\label{theorem:errobound}
	Let $L_{dpm\_ot}$ be the error between the data distribution generated by \textbf{DPM-OT} and the target data distribution which is defined in Eq.~\ref{eq:erroDPMOT}, $L_{vlb}$ is the variational lower bound on negative log-likelihood between data distribution generated by vanilla DPM and the target data distribution which is defined in Eq.~\ref{eq:loss}.  We have $L_{dpm\_ot} \leqslant L_{vlb}$, i.e., $L_{vlb}$ is the upper bound of $L_{dpm\_ot}$.
	\begin{equation}\label{eq:erroDPMOT}
		\begin{split}
			L_{dpm\_ot}=&L_0 + L_1 + ... + L_M+L_{T}\\
			=&-\log \tilde{p}_{\theta}(\boldsymbol{x}_0 | \boldsymbol{x}_1)+D_{KL}(q(\boldsymbol{x}_T | \boldsymbol{x}_0)),p(\boldsymbol{x}_T))\\
			&+\sum_{t=1}^{M-1} D_{KL}(q(\boldsymbol{x}_t|\boldsymbol{x}_{t+1},\boldsymbol{x}_0)|| \tilde{p}_{\theta}(\boldsymbol{x}_t|\boldsymbol{x}_{t+1}))\\
			&+D_{KL}(q(\boldsymbol{x}_M|\boldsymbol{x}_T,\boldsymbol{x}_0)|| \tilde{p}_{\theta}(\boldsymbol{x}_M|\boldsymbol{x}_T)),
		\end{split}
	\end{equation}
	\begin{equation}
		\begin{aligned}
			L_{vlb} =& -\log p_{\theta}(\boldsymbol{x}_0 | \boldsymbol{x}_1)+  D_{KL}(q(\boldsymbol{x}_T | \boldsymbol{x}_0))||p(\boldsymbol{x}_T))  \\
			& +\sum_{t>1} D_{KL}(q(\boldsymbol{x}_{t-1}|\boldsymbol{x}_t,\boldsymbol{x}_0)|| p_{\theta}(\boldsymbol{x}_{t-1}|\boldsymbol{x}_t))\label{eq:loss} 
		\end{aligned}
	\end{equation}
\end{theorem}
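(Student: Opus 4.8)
The plan is to exploit the fact that $L_{dpm\_ot}$ and $L_{vlb}$ share their low-index terms, while $L_{vlb}$ carries $T-M$ extra denoising terms that $L_{dpm\_ot}$ compresses into the single optimal-transport jump $D_{KL}(q(\boldsymbol{x}_M|\boldsymbol{x}_T,\boldsymbol{x}_0)\,\|\,\tilde{p}_{\theta}(\boldsymbol{x}_M|\boldsymbol{x}_T))$. First I would reindex the DPM-OT sum $\sum_{t=1}^{M-1}$ to $\sum_{s=2}^{M}$ so that it lines up termwise with the first $M-1$ denoising terms of $L_{vlb}$, and observe that both expressions contain the identical prior term $D_{KL}(q(\boldsymbol{x}_T|\boldsymbol{x}_0)\,\|\,p(\boldsymbol{x}_T))$ and a reconstruction term of the form $-\log(\cdot)(\boldsymbol{x}_0|\boldsymbol{x}_1)$. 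The difference between the two objectives is therefore concentrated in (i) the shift correction $\tilde{p}_{\theta}=p_{\theta}(\,\cdot-\boldsymbol{\zeta}_t\mid\cdot)$ appearing in the shared terms, and (ii) the jump term versus the block $\sum_{t=M+1}^{T} D_{KL}(q(\boldsymbol{x}_{t-1}|\boldsymbol{x}_t,\boldsymbol{x}_0)\,\|\,p_{\theta}(\boldsymbol{x}_{t-1}|\boldsymbol{x}_t))$ that it replaces.

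The core inequality to establish is that the jump term is dominated by the block of replaced terms:
\begin{equation}
D_{KL}(q(\boldsymbol{x}_M|\boldsymbol{x}_T,\boldsymbol{x}_0)\,\|\,\tilde{p}_{\theta}(\boldsymbol{x}_M|\boldsymbol{x}_T)) \leq \sum_{t=M+1}^{T} D_{KL}(q(\boldsymbol{x}_{t-1}|\boldsymbol{x}_t,\boldsymbol{x}_0)\,\|\,p_{\theta}(\boldsymbol{x}_{t-1}|\boldsymbol{x}_t)).
\end{equation}
To prove this I would invoke the measure-preserving property $g_{\#}\mu=\nu$ of the SDOT map, which guarantees that the jump law $\tilde{p}_{\theta}(\boldsymbol{x}_M|\boldsymbol{x}_T)$ coincides with the $\boldsymbol{x}_M$-marginal of the length-$(T-M)$ reverse chain $p_{\theta}(\boldsymbol{x}_{M:T-1}|\boldsymbol{x}_T)$, the shift $\boldsymbol{\zeta}_M$ absorbing precisely the Monte-Carlo discrepancy between the ideal and the computed map. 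With this identification, the inequality follows from the data-processing inequality for the KL divergence (marginalizing out $\boldsymbol{x}_{M+1},\dots,\boldsymbol{x}_{T-1}$ cannot increase the divergence) together with the chain rule, which expands $D_{KL}(q(\boldsymbol{x}_{M:T-1}|\boldsymbol{x}_T,\boldsymbol{x}_0)\,\|\,p_{\theta}(\boldsymbol{x}_{M:T-1}|\boldsymbol{x}_T))$ into exactly the summed denoising terms on the right.

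For the shared terms $t=1,\dots,M$ I would use the definition $\tilde{p}_{\theta}(\boldsymbol{x}_t|\boldsymbol{x}_{t+1})=p_{\theta}(\boldsymbol{x}_t-\boldsymbol{\zeta}_t|\boldsymbol{x}_{t+1})$ together with $\boldsymbol{\zeta}_t=\tilde{\boldsymbol{x}}_t-\boldsymbol{x}_t$: the shift is introduced precisely to realign the perturbed DPM-OT trajectory with the true diffusion trajectory, so that each shifted reverse density reproduces the value the vanilla density would take along the unperturbed path. Under this alignment the shared reconstruction and denoising terms coincide with their vanilla counterparts and cancel in the difference $L_{vlb}-L_{dpm\_ot}$; any residual discrepancy is $O(\|\boldsymbol{\zeta}_M\|)=O(N^{-\frac12})$ by Theorem~\ref{theorem:oneerro} and is absorbed by the nonnegative slack $\sum_{t=M+1}^{T}D_{KL}(\cdots)\ge 0$ left over from the jump domination. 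Combining the cancellation of the shared terms with the dominance of the jump term then yields $L_{dpm\_ot}\leq L_{vlb}$.

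The hardest part will be the identification in the second step: pinning $\tilde{p}_{\theta}(\boldsymbol{x}_M|\boldsymbol{x}_T)$ to the exact marginal of the multi-step reverse process. The data-processing inequality only delivers the bound once the jump law is tied to that specific marginal, so the argument leans on the measure-preserving guarantee and the $C^1$-almost-everywhere regularity of the Brenier potential. Any mismatch would instead have to be controlled through the error term $\boldsymbol{\zeta}_M$ via an explicit Gaussian KL estimate, and that bookkeeping — rather than the clean DPI step — is where the substantive technical effort would lie.
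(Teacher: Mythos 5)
Your high-level bookkeeping (shared low-index terms, plus a comparison of the single jump term against the discarded block $\sum_{t=M+1}^{T}D_{KL}(\cdot\|\cdot)$, with small residuals absorbed into a nonnegative tail) superficially resembles the paper's decomposition, but your central mechanism is different and contains a genuine gap. The paper never identifies $\tilde{p}_{\theta}(\boldsymbol{x}_M|\boldsymbol{x}_T)$ with the $\boldsymbol{x}_M$-marginal of the multi-step reverse chain, and for good reason: the measure-preserving property $g_{\#}\mu=\nu$ only constrains \emph{marginals} (the pushforward of the Gaussian equals the latent distribution), not the \emph{conditional law given} $\boldsymbol{x}_T$. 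The SDOT map is a specific deterministic coupling chosen to minimize transport cost, while the reverse chain induces an entirely different (stochastic) coupling between $\boldsymbol{x}_T$ and $\boldsymbol{x}_M$; indeed the whole point of DPM-OT is that these transports are \emph{not} the same. So the identification on which your data-processing-inequality step rests is unproven and, in general, false; without it, DPI bounds the wrong quantity ($D_{KL}$ against the chain marginal, not against $\tilde{p}_{\theta}$). Your closing step is also circular: DPI gives the slack $\text{block}-\text{jump}\geq 0$, but you need it to dominate the $O(\|\boldsymbol{\zeta}_M\|)$ residual from the shared terms (which do \emph{not} cancel exactly, since $\tilde{p}_{\theta}(\boldsymbol{x}_t|\boldsymbol{x}_{t+1})=p_{\theta}(\boldsymbol{x}_t-\boldsymbol{\zeta}_t|\boldsymbol{x}_{t+1})\neq p_{\theta}(\boldsymbol{x}_t|\boldsymbol{x}_{t+1})$), and nonnegativity alone does not provide that.

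The paper's actual proof avoids couplings and DPI entirely and is purely perturbative: it writes each $p_{\theta}(\boldsymbol{x}_t|\boldsymbol{x}_{t+1})$ as an explicit Gaussian, bounds the log-density ratio $\bigl|\log\bigl(p_{\theta}(\boldsymbol{x}_t|\boldsymbol{x}_{t+1})/p_{\theta}(\tilde{\boldsymbol{x}}_t|\boldsymbol{x}_{t+1})\bigr)\bigr|$ by $\frac{A_t}{2\sigma_{t+1}^2}\|\boldsymbol{\zeta}_t\|$ under a boundedness assumption on the data, and hence gets the per-term estimate $|\tilde{D}_{KL}-D_{KL}|\leq \frac{A_t}{2\sigma_{t+1}^2}\|\boldsymbol{\zeta}_t\|$. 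Summing over $t=0,\dots,M$, invoking Theorem~\ref{theorem:oneerro} to convert $\|\boldsymbol{\zeta}_t\|$ into $C_t\|\boldsymbol{\zeta}_M\|$, and using the Monte Carlo rate $\|\boldsymbol{\zeta}_M\|=O(N^{-1/2})$, it makes the total deviation $|L_{dpm\_ot}-L_{vlb}^{0:M}|$ smaller than the discarded nonnegative block $L_{vlb}^{M+1:T}$ by taking $N$ large, giving $L_{dpm\_ot}\leq L_{vlb}^{0:M}+L_{vlb}^{M+1:T}=L_{vlb}$. In other words, the quantitative Gaussian bookkeeping that you relegate to a fallback (``any mismatch would instead have to be controlled through the error term $\boldsymbol{\zeta}_M$ via an explicit Gaussian KL estimate'') is not a fallback at all --- it is the entire proof; to repair your argument you would have to carry it out, at which point the DPI scaffolding becomes unnecessary.
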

\begin{proof}
	It may be assumed that$\boldsymbol{X}_t = \boldsymbol{x}_t-\mu_{t+1}(\boldsymbol{x}_{t+1})$,
	\begin{equation}
		\begin{aligned}
			p_{\theta}(\boldsymbol{x}_{t}|\boldsymbol{x}_{t+1}) = \mathcal{N}(\boldsymbol{x}_{t}|\mu_{t+1}(\boldsymbol{x}_{t+1}), \sigma_{t+1}^2I)\\
			=\frac{1}{(2\pi)^{d/2}\sigma_{t+1}^{d}} 
			\exp\left({-\frac{1}{2\sigma_{t+1}^2}}\boldsymbol{X}_t^{T}\boldsymbol{X}_t\right)
		\end{aligned}
	\end{equation}
	\begin{equation}
		\begin{aligned}
			p_{\theta}(\tilde{\boldsymbol{x}}_{t}|\boldsymbol{x}_{t+1})=\mathcal{N}(\boldsymbol{x}_{t}+\boldsymbol{\zeta}_{t}| \boldsymbol{\mu}_{t+1}(\boldsymbol{x}_{t+1}), \sigma_{t+1}^2I)\\
			=\frac{1}{(2\pi)^{d/2}\sigma_{t+1}^{d}}  
			\exp\left(-\frac{1}{2\sigma_{t+1}^2}\left(\boldsymbol{X}_t-\boldsymbol{\zeta}_t)^{T}(\boldsymbol{X}_t-\boldsymbol{\zeta}_t\right)\right)
		\end{aligned}
	\end{equation}
	By definitions of $p_{\theta}(\boldsymbol{x}_t|\boldsymbol{x}_{t+1})$ and $p_{\theta}(\tilde{\boldsymbol{x}}_t|\boldsymbol{x}_{t+1})$, we know the following equation
	\begin{equation}\label{sup-1}
		\begin{split}
			&\left | \log\frac{p_{\theta}(\boldsymbol{x}_t|\boldsymbol{x}_{t+1})}{p_{\theta}(\tilde{\boldsymbol{x}}_t|\boldsymbol{x}_{t+1})} \right |\\
			=&\frac{1}{2\sigma_{t+1}^2}\left|2\boldsymbol{x}_t\boldsymbol{\zeta}_t+\boldsymbol{\zeta}_t^T\boldsymbol{\zeta}_t-2\boldsymbol{\mu}_{t+1}(\boldsymbol{x}_{t+1})^T\boldsymbol{\zeta}_t\right|\\
			\leq&\frac{1}{2\sigma_{t+1}^2}\left ( 2||\boldsymbol{x}_t||\cdot||\boldsymbol{\zeta}_t||+||\boldsymbol{\zeta}_t||^2+2||\boldsymbol{\mu}_{t+1}(\boldsymbol{x}_{t+1})||\cdot\parallel\boldsymbol{\zeta}_t\parallel \right )\\
			=&\frac{1}{2\sigma_{t+1}^2}\left(2||\boldsymbol{x}_t||+||\boldsymbol{\zeta}_t||+2||\boldsymbol{\mu}_{t+1}(\boldsymbol{x}_{t+1})||\right )||\boldsymbol{\zeta}_t||
		\end{split}
	\end{equation}
	Suppose $\boldsymbol{x}$ is bounded with $[a, b]^d$, then there is constant $A_t>0$ makes
	\begin{equation}\label{sup-2}
		2||\boldsymbol{x}_t||+||\boldsymbol{\zeta}_t||+2||\boldsymbol{\mu}_{t+1}(\boldsymbol{x}_{t+1})||\leq A_t
	\end{equation}
	Applying inequality \ref{sup-2} to equation \ref{sup-1}, we get
	\begin{equation}
		\begin{split}   
			|\tilde{D}_{KL}- D_{KL}|
			=&\left| \int_{X_t}q(\boldsymbol{x}_t|\boldsymbol{x}_{t+1}) \log\frac{p_{\theta}(\boldsymbol{x}_t|\boldsymbol{x}_{t+1})}{p_{\theta}(\tilde{\boldsymbol{x}}_t|\boldsymbol{x}_{t+1})}d\boldsymbol{x}_t \right|\\
			\leq& \frac{A_t}{2\sigma_{t+1}^2}\cdot||\boldsymbol{\zeta}_t||
		\end{split}
	\end{equation}
	where $\tilde{D}_{KL} = D_{KL}(q(\boldsymbol{x}_t|\boldsymbol{x}_{t+1})|| p_{\theta}(\tilde{\boldsymbol{x}}_t|\boldsymbol{x}_{t+1}))$,  $D_{KL} = D_{KL}(q(\boldsymbol{x}_t|\boldsymbol{x}_{t+1})|| p_{\theta}(\boldsymbol{x}_t|\boldsymbol{x}_{t+1}))$. Therefore there is
	\begin{equation}
		\begin{split}
			&|L_{dmot}-L_{vlb}^{0:M}|\\
			\leq& 
			\sum_{t=0}^{M}|\tilde{D}_{KL}- D_{KL}|
			\\
			\leq&(M+1)\max_t \frac{A_t}{2\sigma_{t+1}^2} |C_t|\cdot||\boldsymbol{\zeta}_M||
		\end{split}
	\end{equation}
	That is
	\begin{equation}
		L_{dmot}\leq L_{vlb}^{0:M}+(M+1)\max_t \frac{A_t}{2\sigma_{t+1}^2} |C_t|\cdot||\boldsymbol{\zeta}_M||
	\end{equation}
	Because of $\boldsymbol{\zeta}_M=O(N^{-\frac{1}{2}})$, we can make $\boldsymbol{\zeta}_M$ arbitrarily small by increasing the number of $OT$ samples $N$. For a given \textbf{DPM}, there exists $\boldsymbol{\zeta}_M$ such that the following formula is true.
	\begin{equation}
		(M+1)\max_t \frac{A_t}{2\sigma_{t+1}^2} |C_t|\cdot||\boldsymbol{\zeta}_M||\leq L_{vlb}^{M+1:T}
	\end{equation}
	So we have
	\begin{equation}
		L_{dmot}\leq L_{vlb}
	\end{equation}
	
\end{proof}

\section{Implementation Details}
Additional details about hyperparameter settings of \textbf{Algorithm 1} and \textbf{Algorithm 2} are elucidated in this section.  In the experiments, we instantiate the DPM model $\mathbf{s}_\theta$ of \textbf{DPM-OT} with pre-trained models of NCSNv2 \cite{NEURIPS2020_92c3b916}. In addition, for a fair comparison, we also adopted the same sampling schedule $\{(b_t,\sigma_t)\}_{t=0}^{T}$ as NCSNv2.

In \textbf{Algorithm 1}, we use the Monte Carlo method to solve the SDOT map. We set the number of Monte Carlo samples $N=10\times |\mathcal{I}|$, where $|\cdot|$ denotes the number of elements in the set. For the learning rate $lr$, we set the $lr$ on the datasets CIFAR10~\cite{krizhevsky2009learning}, CelebA~\cite{liu2015deep} and FFHQ~\cite{Karras_2019_CVPR} to $lr=0.1$, $lr=20$, and $lr=50$, respectively. For better convergence, we double the number of samples $N$ and multiply the learning rate $lr$ by 0.8 when the energy function $E(\boldsymbol{h})$ has not decreased for $s=50$ steps.  Moreover, we set threshold $\tau=8\times 10^{-4}$. When the energy function $E(\boldsymbol{h})<\tau$ or the total number of iteration steps is greater than 10000, the optimization of $\boldsymbol{h}$ will be stopped.

In \textbf{Algorithm 2}, reverse diffusion steps $M$ is a variable that satisfies $0<M<T$. We have carried out five experiments on the datasets CIFAR10, CelebA, and FFHQ, where values of $M$ are 5, 10, 20, 30 and 50 respectively. We find that with the increase of $M$, the image FID score will decrease, but this decline will tend to be flat with the increase of $M$, which is reflected in Table 1 of the paper.



\section{Additional Qualitative Results}
\begin{figure}[ht]
	\centering
	\subfigure{
		\includegraphics[width=0.49\textwidth]{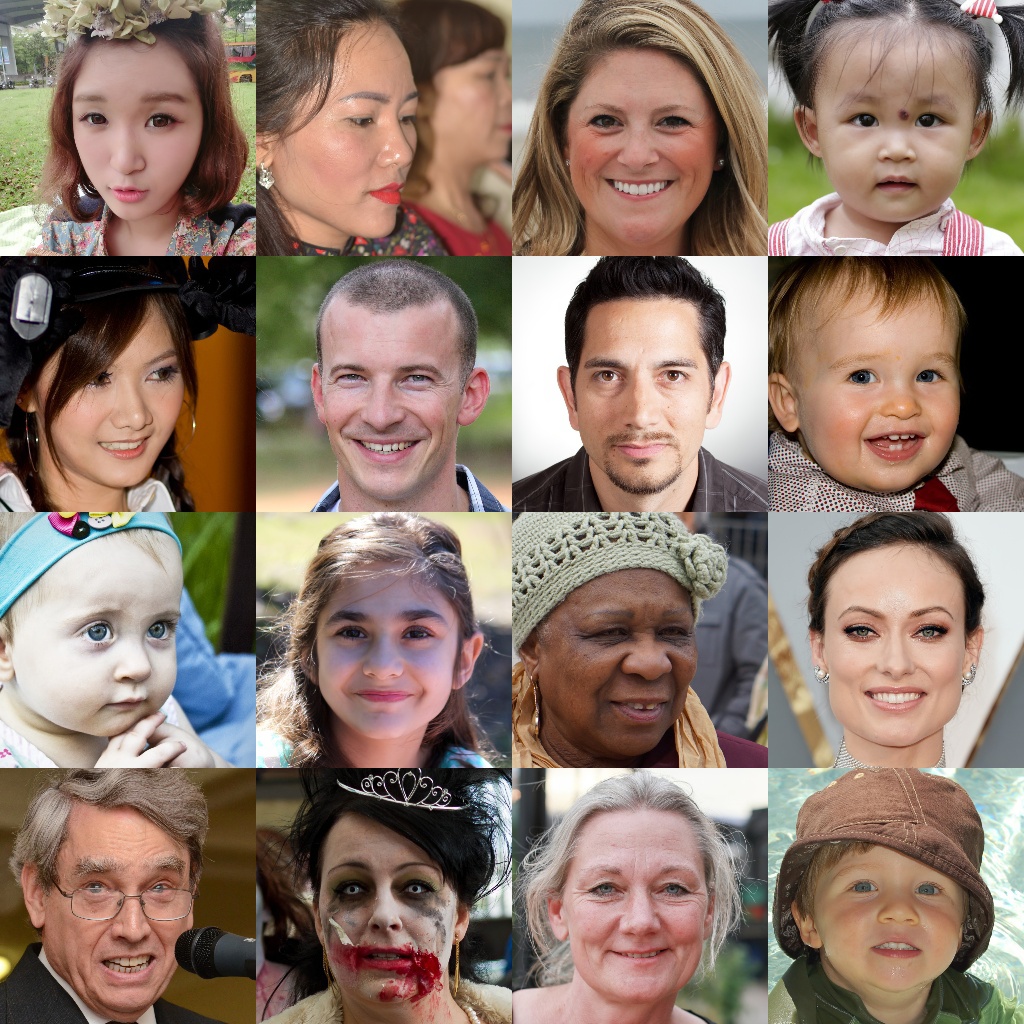}}
	\subfigure{
		\includegraphics[width=0.49\textwidth]{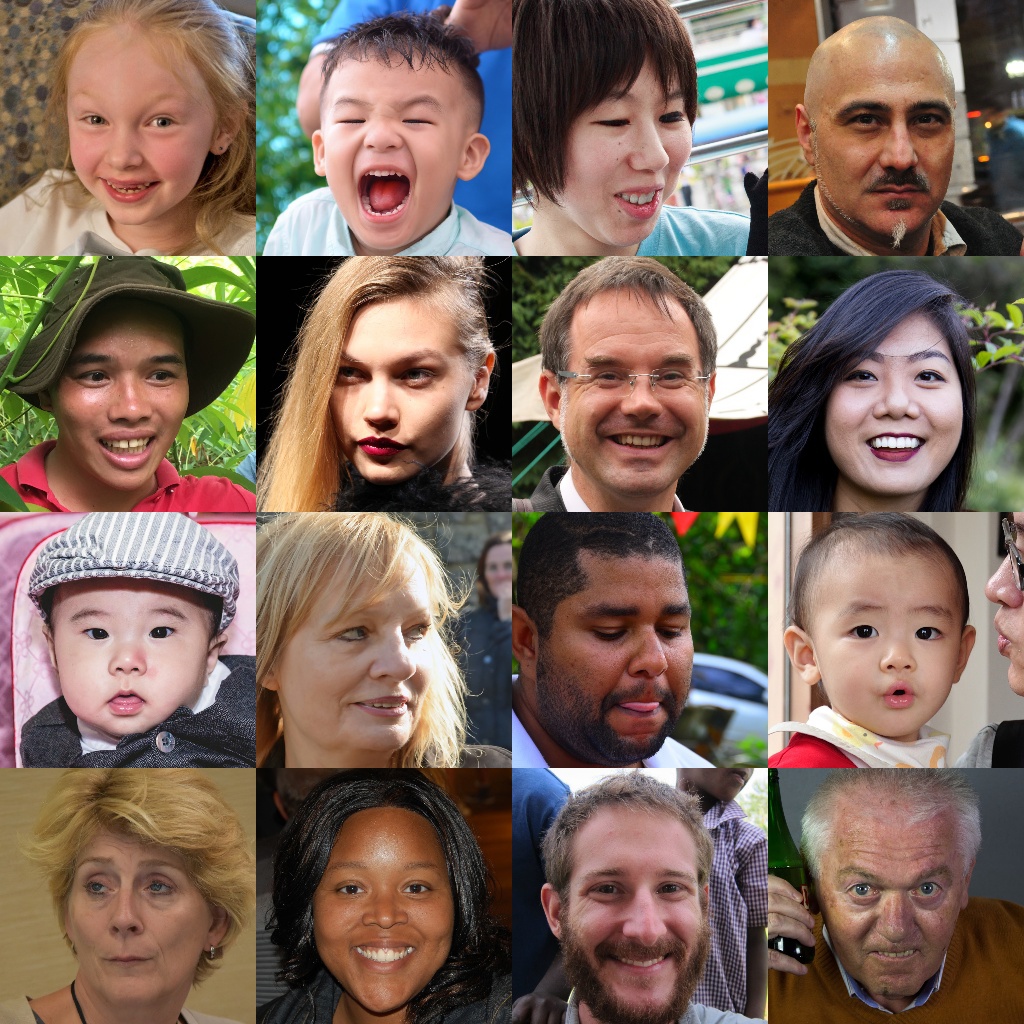}}
	\setlength{\abovecaptionskip}{0cm}  
	\setlength{\belowcaptionskip}{0cm} 
	\caption{The visualization of our model on FFHQ (10 steps).}
	\label{FFHQ}
\end{figure}

In this section, we show three more qualitative results of the proposed \textbf{DPM-OT} on FFHQ, Cifar10, and CelebA respectively.
Among them, Fig.~\ref{FFHQ} and Fig.~\ref{Cifar10} show the results obtained by our model with 10 steps of inverse diffusion on the FFHQ $256\times 256$ and Cifar10, respectively, and Fig.~\ref{CelebA} shows the results obtained with only 5 steps of inverse diffusion on the CelebA dataset. As these results show, our model can obtain high-quality images after 5-10 reverse diffusion.

\begin{figure}[ht]
	\centering
	\includegraphics[width=0.5\textwidth]{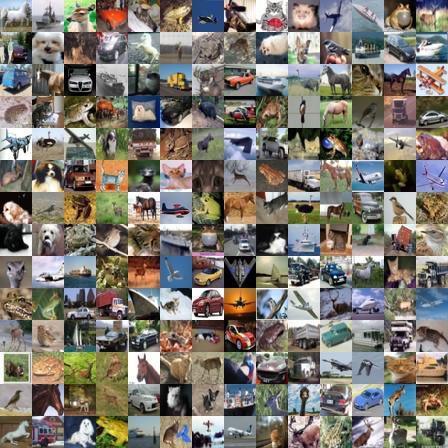}
	\setlength{\abovecaptionskip}{0cm}  
	\setlength{\belowcaptionskip}{0cm} 
	\caption{The visualization of our model on Cifar10 (10 steps).}
	\label{Cifar10}
\end{figure}

\begin{figure}[ht]
	\centering
	\includegraphics[width=0.5\textwidth]{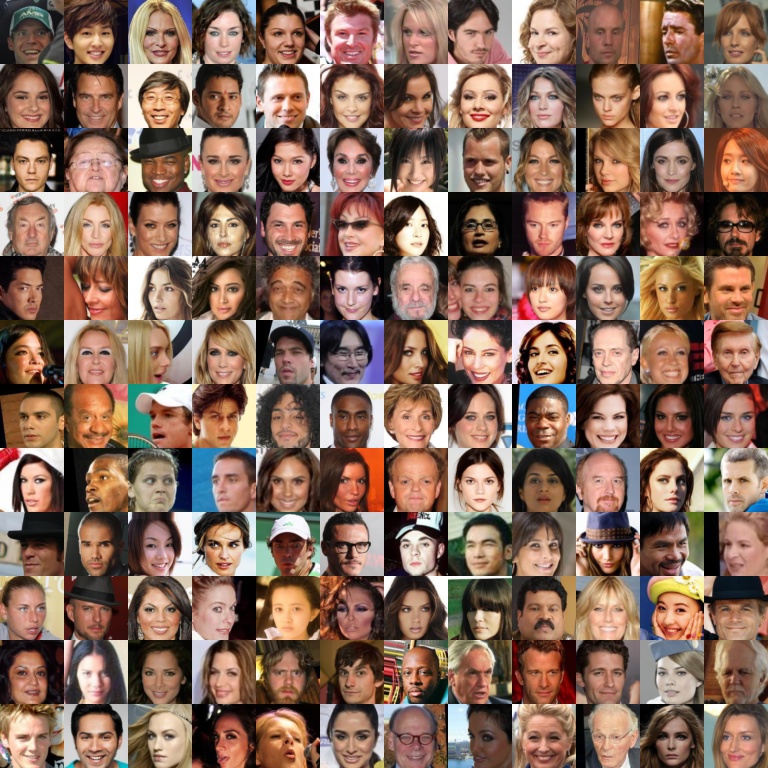}
	\setlength{\abovecaptionskip}{0cm}  
	\setlength{\belowcaptionskip}{0cm} 
	\caption{The visualization of our model on CelebA (5 steps).}
	\label{CelebA}
\end{figure}

\end{document}